\begin{document}
\title{Advancing subgroup fairness via sleeping experts}

 \author{
  Avrim Blum\thanks{Toyota Technological Institute at Chicago, \texttt{avrim@ttic.edu}. The author was supported in part by NSF grants CCF-1815011 and CCF-1733556.}
  \and 
 Thodoris Lykouris\thanks{Microsoft Research, \texttt{thlykour@microsoft.com}. Research initiated during the author's visit to TTI-Chicago while he was a Ph.D. student at Cornell University. The author was supported in part under NSF grant CCF-1563714 and a Google Ph.D fellowship.} 
 }
\date{December 2019}
\maketitle




\begin{abstract}
We study methods for improving fairness to subgroups in settings with overlapping populations and sequential predictions. Classical notions of fairness focus on the balance of some property across different populations. However, in many applications the goal of the different groups is not to be predicted equally but rather to be predicted well. We demonstrate that the task of satisfying this guarantee for multiple overlapping groups is not straightforward and show that for the simple objective of unweighted average of false negative and false positive rate, satisfying this for overlapping populations can be statistically impossible even when we are provided predictors that perform well separately on each subgroup. On the positive side, we show that when individuals are equally important to the different groups they belong to, this goal is achievable; to do so, we draw a connection to the sleeping experts literature in online learning. Motivated by the one-sided feedback in natural settings of interest, we extend our results to such a feedback model.  We also provide a game-theoretic interpretation of our results, examining the incentives of participants to join the system and to provide the system full information about predictors they may possess.  We end with several interesting open problems concerning the strength of guarantees that can be achieved in a computationally efficient manner.
\end{abstract}

\section{Introduction}
\label{sec:intro}
Concerns about ethical use of data in algorithmic decision-making have spawned an important conversation regarding machine learning techniques that are fair towards the affected populations. We focus here on binary decision-making: e.g., deciding whether to approve a loan, admit a student to an honors class, display a particular job advertisement, or prescribe a particular drug. While multiple fairness notions have been suggested to inform these decisions, most assume access to labeled data and that this data is drawn from i.i.d. distributions. In practice, data patterns are often dynamically evolving and the feedback received is biased by the decisions of the algorithms --- such as only learning whether a student should have been admitted to an honors class if the student is actually admitted --- which induces additional misrepresentation of the actual data patterns. Despite the rich literature, approaching fairness considerations without these strong assumptions is rather underexplored.

Most fairness notions impose a requirement of balance across groups.  For example, demographic parity \cite{Calders09buildingclassifiers} aims to ensure that the percentage of positive predictions is the same across different populations, and equality of opportunity \cite{hardt2016equality} aims to ensure that the percentage of false negative predictions is the same across them. These notions are useful in identifying inequities between subpopulations, especially in settings such as criminal recividism \cite{kleinberg2017human} where there is a conflict between the incentives of the participants and the goal of accurate prediction. However, these notions may not be appropriate when the goal of each group is to be predicted as accurately as possible, and explicitly performing worse on one group in order to produce balance would be morally objectionable or absurd.  For example, in a health application, a balance notion may lead to penalizing the majority population by willfully providing it worse treatment to make amends for the fact that a minority population is not classified correctly due to insufficient data. This would be clearly inappropriate.

More generally, there are many natural scenarios where the goal of each group is just to be predicted as accurately as possible. A student wishes to be admitted in an honors class only if they are qualified to succeed in it; otherwise their academic record may be jeopardized. A person requesting a microloan can be often significantly harmed by receiving it unless they return it (see \cite{LiuDeRoSiHa18} for an interesting discussion). A drug prescription is only beneficial if it helps enhance the health of the patient; otherwise it may cause adverse effects. In these settings what groups care about is not being treated \emph{equally} but rather being treated \emph{well}.

In this work, we consider a fairness notion for sequential non-i.i.d. settings where the subpopulations and the designer both strive for accurate predictions. We say that a prediction rule $f$ is unfair to a subgroup $g$ with respect to a family of rules ${\cal F}$ if there is some other rule $f_g \in {\cal F}$ that performs significantly better than $f$ on $g$, in which case we say that $f_g$ witnesses this unfairness.  More generally, given a collection of rules, some of which may come from the designer, some from third-party entities, and some from the groups themselves, our goal is to achieve performance on each group comparable to that of the best of these rules for that group, even when groups overlap.  Moreover, we aim to achieve this goal with as strong bounds as possible in a challenging ``apple-tasting'' feedback model where we only receive feedback on positive predictions (e.g., when a loan is given or a student is admitted). Interestingly, while our main results are positive, we show such guarantees are {\em not} possible if we replace the performance measure of accuracy (or error) with any fixed weighted average of false-positive and false-negative rates. Our positive results can also be thought of as a form of {\em individual rationality} with respect to the groups: no group has any incentive (up to low order terms) to pull out and, say, form its own lending agency just for members of that group.  From this perspective, we also consider notions of \emph{incentive compatibility} (could groups have any incentive to hide prediction rules from the system) and present a computationally-inefficient algorithm along with an open problem related to achieving this guarantee in a computationally-efficient manner.

\subsection{Our contribution}
We consider a decision-maker with some global decision function
$f_o$ that she would like to use (say to decide who gets a loan), and a collection of groups $\mathcal{G}$, where each group $g \in \mathcal{G}$ proposes some function $f_g$ that {\em it} would like to be used instead on members of $g$.\footnote{Both decision-maker and groups may have more functions; the guarantees are with respond to the best of them.} The guarantee we aim to give is that our overall performance will be nearly as good (or better) than $f_o$ on the entire population with respect to the objective function of the decision-maker, and for each group $g$ our performance is nearly as good (or better) than the performance of $f_g$ with respect to group $g$'s objective. We would like to do this even when groups are overlapping and even when feedback on whether or not a decision was correct is only received when the decision made was ``yes'' (e.g., when the loan was given or the student was admitted).  

Surprisingly, we show that when groups are overlapping and when the group objectives are to minimize the {\em unweighted} average of false-positive rate (FPR) and false-negative rate (FNR), there exist settings where {\em every} global prediction rule $f$ must be unfair to one of the groups. In particular, we present a simple example with two overlapping groups having predictors $f_1$ and $f_2$ respectively, where performing nearly as well as $f_1$ on group 1 and nearly as well as $f_2$ on group 2 is fundamentally impossible when performance is measured as (FPR$+$FNR)$/2$ (or as max(FPR,FNR) or as any fixed non-degenerate weighting) even when the input does not arrive in an online manner. This shows that just having the fairness notions be the same across groups and aligned with the goals of the designer (who in this case does not have any additional goal other than to eliminate unfairness) is not by itself sufficient to be able to achieve our fairness criteria under objectives based on fixed combinations of FPR and FNR.

\begin{informal}[Theorem~\ref{thm:unweighted_average_example}]
\vspace{0.1in}
If subgroups are overlapping and their objectives are to minimize the unweighted average of false positive and false negative rates, there exist instances where no global function can simultaneously perform nearly as well on each group as the best function for that group.  This holds even in the batch setting.
\end{informal}

Instead we aim for low absolute error on each subgroup\footnote{This is equivalent to FPR on a group weighted by the fraction of negative examples in that group plus FNR on a group is weighted by the fraction of positive examples in that group.} and show a connection of this notion to an adversarial online learning setting, that of \emph{sleeping experts} \cite{blum1997empirical,freund1997using}. In sleeping experts, each predictor (also referred to as an expert) can decide at each round to either make a prediction (fire) or abstain (sleep). Sleeping experts algorithms guarantee that, for any expert, the performance of the algorithm when the expert fires is nearly as good as the performance of this expert. Providing the functions $f_o$ for the decision-maker and $f_g$ for each group into existing sleeping-experts algorithms (viewing $f_g$ as abstaining on any individual outside of group $g$) yields the following.

\begin{informal}[Theorem~\ref{thm:subgroup_via_sleeping}]
\vspace{0.1in}
For the objective of minimizing absolute error, we can perform nearly as well as $f_g$ for all groups $g$ while performing nearly as well as $f_o$ overall.
\end{informal}

One particular complication that arises in many fairness-related settings, however, is that feedback received is one-sided: we only learn about the outcome if the loan is given, the student is admitted in the class, the advertisement is displayed, or the drug is prescribed; we do not learn about what would have happened when the action is not taken. In online learning, this is known as the \emph{apple tasting} model \cite{HelmboldLL92_apples}. We therefore  initiate the study of sleeping experts in this apple tasting feedback model, aiming to achieve as strong regret guarantees as possible on a per-group basis.
Combining apple tasting with sleeping experts poses interesting challenges as the exploration needs to be carefully coordinated among different subpopulations. In Section \ref{sec:sleeping_apples}, we provide three different black-box reductions with different advantages in their performance guarantee. 

\begin{informal}[Theorems~\ref{thm:first_reduction}, \ref{thm:second_reduction}, and \ref{thm:third_reduction}]
\vspace{0.1in}
Even if we only receive one-sided feedback, we can still perform nearly as well as $f_g$ for all groups $g$ while performing nearly as well as $f_o$ overall.
\end{informal}
Each of our guarantees is somewhat suboptimal. Theorem~\ref{thm:first_reduction} is based on a construction that does not use sleeping experts, but has an exponential dependence on the number of groups which makes it computationally inefficient when there are many  groups. Theorem~\ref{thm:second_reduction} is a natural adaptation of sleeping experts to this setting but has an error bound with a (sublinear) dependence on the size of the total population instead of only depending on size of the subgroup, which makes the result less meaningful for small groups as this term may dominate their regret bound. Last, Theorem~\ref{thm:third_reduction} has a more involved use of sleeping experts and does not suffer from the two previous issues, but has a suboptimal dependence on the size of the subgroup. Combining the advantages of these approaches without the resulting shortcomings is an intriguing open question.

The final contribution of our work (Section~\ref{sec:incentives}) is to provide a game-theoretic investigation of our setting in terms of incentives of the participating groups. In mechanism design, two important properties that a mechanism should satisfy are Individual Rationality (IR) and Incentive Compatibility (IC).  The former asks that no player should prefer to opt out and seek service outside of the system instead. The latter refers to the mechanism creating no incentives for players to misreport their private information. Inspired by the kidney exchange literature \cite{RothSonmezUnver07,AshlagiRoth2014free,AshlagiFisKasPro13}, we consider each group as a player in our system. The IR property is satisfied when group $g$ can get no benefit (asymptotically) from being predicted by their individual predictor $f_g$, say via their own loan agency. This is exactly what our above guarantees provide and therefore they can be interpreted as asymptotically IR. This observation brings up the question of whether incentive compatibility is also satisfied by sleeping experts algorithms. In this context, IC means that if a group $g$ has a set of predictors $\{f_g\}$ then they get no benefit (asymptotically) from hiding some of those predictors from the decision-maker. Unfortunately, we show that current sleeping experts algorithms are {\em not} (even approximately) incentive compatible. On the other hand, we provide an algorithm that achieves both IR and IC guarantees, as well as operating in the apple tasting setting, at the expense of being computationally inefficient (enumerating over all exponentially-many group intersections). This leads to an interesting open question of finding a computationally efficient algorithm that satisfies both IR and IC properties.

\begin{informal}[Theorem~\ref{thm:negative_IC}]
\vspace{0.1in}
Classical sleeping experts algorithms such as AdaNormalHedge do not satisfy the IC property. 
\end{informal}

\begin{informal}[Theorem~\ref{thm:positive_IC}]
\vspace{0.1in}
Separate multiplicative weights algorithms for each intersection of groups satisfy the IC property at the expense of being computationally inefficient.
\end{informal}

\begin{open}[Section~\ref{ssec:open_IC}]
\vspace{0.1in}
Does there exist a computationally efficient algorithm satisfying both IR and IC properties?
\end{open}

\subsection{Related work}
\label{ssec:related_work}
There is a growing literature aiming to identify natural fairness notions and understand the limitations they impose; see \cite{DworkHPRZ2012awareness, hardt2016equality, Kleinberg2017InherentTI, chouldechova2017fair, kilbertus2017avoiding, agarwal2018reductions} for a non-exhaustive list. With respect to fairness among different demographic groups, notions such as disparate impact \cite{Calders09buildingclassifiers,FeldmanFMSV2015disparate} or equalized odds \cite{hardt2016equality} aim to achieve some balance in performance across different populations. These make sense when there is an intrinsic conflict between the desires of the different groups and those of the designer and can help identify undesirable inequities in the system that require remedies (that are often non-algorithmic and need policy changes).
Unfortunately, aiming to satisfy these notions can also have undesired implications such as intentionally misclassifying some groups to make amends for the inability to classify other groups well enough. This has given rise to an important debate around alternative notions that do not suffer of these issues (see for example \cite{corbett2018measure}). Our work aims to advance this direction.

When populations are overlapping, there are several recent works in the batch setting that tackle considerations similar to the ones we address. Kearns et al. \cite{kearns2017gerrymandering} provide a simple example illustrating the issue that one can be non-discriminatory with respect to, say, gender and race in isolation but heavily discriminate against, say, black female participants; they also discuss how to audit whether a classifier exhibits such unfairness with respect to groups. In a similar motivation, Hebert-Johnson et al. \cite{pmlr-v80-hebert-johnson18a} et al. suggest a fairness notion they term mutli-calibration that relates 
to accurate prediction of all populations that are \emph{computationally identifiable}. Both these works show that their settings are equivalent to agnostic learning, which has strong computational hardness results but tends to have good heuristic guarantees.\footnote{Their connection implies that for exact auditing, a linear dependence on the number of subpopulations is required but can be overcome if additional structure exists. Our work also has a linear dependence on the number of subgroups as it explicitly lists the subgroups. Understanding if this can be improved in our setting when the subgroups and predictors have additional structure} is an interesting open direction. Subgroup fairness is also discussed by Kim et al.~\cite{KimGhorZou2019,KimReinRoth2018} for the related notion of multi-accuracy, with the aim of post-processing data to achieve this notion while maintaining overall accuracy \cite{KimGhorZou2019} as well as metric subgroup fairness notions \cite{KimReinRoth2018}. Our approach
similarly considers overlapping demographic groups, but focuses on the more complex setting of online decision-making under limited feedback and addresses inherent conflicts between the incentives of different subgroups. In fact, even for the batch setting, our impossibility result of Section \ref{sec:incompatibility_example} with respect to the criterion of unweighted average of false positive and false negative rates is of a purely statistical flavor (has nothing to do with computational issues or arrivals being online). It therefore provides insights on complications that exist even when all the incentives seem well aligned. {On the other hand, our connection to sleeping experts does not appear to directly have implications to multicalibration and multiaccuracy for similar reasons as the issues encountered regarding Incentive Compatibility; these notions tend to require a similar \emph{no negative regret} property in order to be satisfied in an online context (similar to our results in Sections~\ref{ssec:negative_IC}-\ref{ssec:positive_IC}).}

Fairness issues in online decision-making have also recently been considered. Joseph et al. \cite{JosephKMR2016fairness} focus on a bandit learning approach and impose what they call \emph{meritocratic fairness} against individuals with some stochastic but unknown quality. Celis et al. \cite{celis2018algorithmic} discuss how to alleviate bias in settings like online advertising where bandit learning can lead to overexposure or underexpsure to particular actions. Blum et al. \cite{BlumGuLySr18} focus on adversarial online learning and examine for different fairness notions whether non-discriminatory predictors can be combined efficiently in an online fashion while preserving non-discrimination. A recent line of work also focuses on online settings where decisions made today affect what is allowed tomorrow as they need to be connected via some metric-based notion of individual fairness \cite{liu2017calibrated,gillen2018online, gupta_kamble_2018}. Related to our work, Raghavan et al. \cite{RaghavanSlWoWu18} study externalities that arise in the exploration of classical stochastic bandit algorithms when applied across different subpopulations. Finally, Bechavod et al. \cite{bechavod2019equal} study a similar notion of one-sided feedback in online learning with fairness in mind. Unlike us, the latter work does not apply to overlapping populations but instead they take a contextual bandit approach, focus on stochastic and not adversarial losses, and aim for balance notions of fairness (where, as we discussed, there is conflict between incentives of designer and the groups).

Our work applies adversarial online learning which was initiated by the seminal works of Littlestone and Warmuth \cite{Littlestone:1994:WMA:184036.184040}, and Freund and Schapire \cite{FreundSc1997}. The classical experts guarantee we use in our first reduction can come from any of these or later developed algorithms. The apple tasting setting we consider to model the one-sided feedback was introduced by Helmbold et al. \cite{HelmboldLL92_apples}; a related concept is that of label-efficient prediction that instead has an upper bound on the number of times the learner can explore \cite{cesa2005minimizing}. Sleeping experts have been introduced by Blum \cite{blum1997empirical} and Freund et al. \cite{freund1997using}. Subsequently they were extended to a more general case of confidence-rated experts, and the results were better optimized \cite{blum2007external, gaillard2014second, DBLP:conf/colt/LuoS15}. The sleeping expert full feedback guarantee we use in our reductions is the one of AdaNormalHedge \cite{DBLP:conf/colt/LuoS15}. To the best of our knowledge, our work is the first to consider sleeping experts in the context of either apple tasting or label efficient prediction. We do so to enable our algorithms to deal with overlapping populations while only using realistic feedback (incorporating the one-sided nature of feedback).


\section{Basic model}
\label{ssec:model}
\paragraph{Online learning with group context.} We consider an online learning setting with multiple overlapping demographic groups.  The set of groups $\mathcal{G}$ can correspond to divisions based on gender, age, ethnicity, or other attributes. People (also referred to as examples) arrive sequentially, and the example at round $t=1,2,\dots,T$ can be simultaneously a member of multiple groups (e.g., \emph{female} and \emph{hispanic}); the subset $\mathcal{G}_t\subseteq\mathcal{G}$ denotes all groups that person $t$ belongs to. 

At each round $t$, the system designer (or \emph{learning algorithm} or \emph{learner}) denoted by $\mathcal{A}$ aims to classify incoming examples by predicting a label $\hat{y}_t\in \hat{\mathcal{Y}}$. For example, in binary classication with deterministic predictors, the prediction space consists of positive and negative labels, i.e. $\hat{\mathcal{Y}}\in\crl{+,-}$ (e.g. whether the corresponding person should be admitted to an honors class). To assist her goal, the designer has access to a set $\mathcal{F}_t$ of rules that suggest particular labels according to the features of the example; these are typically referred to as \emph{experts} although they are not necessarily associated to any particular external knowledge. At every example, each expert $f\in\mathcal{F}_t$ makes a prediction $\hat{y}_{t,f}\in\hat{\mathcal{Y}}$ and the learner selects which expert's advice to follow in a (possibly randomized) manner; we use $p_{t,f}$ to denote the probability with which the learner follows the advice of expert $f$. Subsequently, the true label $y_t\in\mathcal{Y}$ is realized and each expert $f$ is associated with a loss $\ell_{t,f}\in[0,1]$. For example, if both the prediction and true label spaces are deterministic, i.e. $\hat{\mathcal{Y}}=\mathcal{Y}=\crl{+,-}$, then a reasonable loss is whether the prediction was correct: $\ell_{t,f}=\mathbf{1}\crl{\hat{y}_{t,f}\neq y_t}$. In order to not impose any i.i.d. assumption, we allow the losses to be adversarially selected. The learner then suffers expected loss $\hat{\ell}_t\prn*{\mathcal{A}}=\sum_{f\in\mathcal{F}_t} p_{t,f} \ell_{t,f}$ and observes some feedback (discussed below).

\paragraph{Feedback observed} In the first portion of this paper, we assume the learner receives \emph{full feedback}, i.e. at the end of the round she observes the losses of all experts (this typically can be achieved by observing the label). In Section~\ref{sec:sleeping_apples}, we turn our attention to the apple tasting setting in which we only receive feedback about the losses when we select the positive outcome. To ease notation and streamline presentation, we present here the performance and fairness notions for full feedback and defer the apple tasting definitions to Section~\ref{sec:sleeping_apples}.

\paragraph{Overall regret guarantee.} One natural goal for the designer in this setting is that it performs as well as the best among a class of experts $\mathcal{F}$ that are available every round, i.e.  $\forall t: \mathcal{F}\subseteq\mathcal{F}_t$. Intuitively, when there exists a rule that consistently classifies examples more accurately, the learner should eventually realize this fact and trust it more often (or at least perform as well via combining other rules). This is formalized by the following notion of regret.
\begin{equation}
    \regret=\sum_{t=1}^T \sum_{f\in\mathcal{F}_t}p_{t,f}\ell_{t,f}-\min_{f^{\star}\in\mathcal{F}}\sum_{t=1}^T\ell_{t,f^{\star}}.
\end{equation}
In the classical expert setting, $\mathcal{F}_t=\mathcal{F}$ for all rounds $t$. Algorithms such as the celebrated multiplicative weights \cite{FreundSc1997} incur regret that, on average, vanishes over time at a rate of $\sqrt{\log(\abs{\mathcal{F}})/T}$. Hence, despite the input not being i.i.d., the penalty we pay for not knowing in advance which expert is the best is very small and goes to $0$ at a fast rate. We allow changes in the sets $\mathcal{F}_t$ to incorporate adaptive addition of rules as well as group-specific rules.

\paragraph{Subgroup regret guarantees.} 
In order to not treat any group worse than what is inevitable, we are especially interested in group-based performance guarantees. In particular, for each group $g\in\mathcal{G}$, we want the performance on its members to be nearly as good as the best expert in a class $\mathcal{F}(g)$. This class can consist of all rules in $\mathcal{F}$, in which case this means we care not only about competing with the best rule in the class overall but also having the same guarantee for each group. It also allows each group to have rules specialized to it; for example, a third-party entity may observe a disparity in the performance of the group compared to what is achievable and recommend the use of a particular rule. To ease presentation we assume that the set $\mathcal{F}(g)$ is fixed in advance but most of our results extend to the case where new rules are added adaptively as potential unfairness is observed (see Remark~\ref{rem:adaptively_adding_experts}). This notion of group-based regret is formalized below: 
\begin{equation}\label{eq:subgroup_regret}
    \regret(g)=\sum_{t: g\in\mathcal{G}_t} \sum_{f\in\mathcal{F}_t}p_{t,f}\ell_{t,f}-\min_{f^{\star}\in\mathcal{F}(g)}\sum_{t:g\in\mathcal{G}_t}\ell_{t,f^{\star}}.
\end{equation}
In Section~\ref{ssec:subgroup_fairness_sleeping_experts}, we show that, via connecting to the literature of sleeping experts, we can have the average group-based regret vanish across time for all groups while still retaining the overall regret guarantee described above. In Section~\ref{sec:incompatibility_example}, we show that this is not in general possible for objectives based on fixed averages of false-positive and false-negative rates.

\section{Sleeping experts and one-sided feedback}
\label{ssec:subgroup_fairness_sleeping_experts}
In this section, we focus on subgroup regret guarantees and conceptually connect the quest for these guarantees to the literature of sleeping experts and the incentives of the groups.

\subsection{Subgroup regret guarantees via sleeping experts}

\paragraph{Sleeping experts.} Sleeping expert algorithms were originally developed to seamlessly combine task-specific rules so that their coexistence does not create negative externalities to other tasks. More formally, there is a set of experts $\mathcal{H}$ and, at each round $t$, any expert $h\in\mathcal{H}$ may decide to either become active (fire) or abstain (sleep); the set of experts that fire at round $t$ is denoted by $\mathcal{H}_t$. At any round, the algorithm can only select among active experts, i.e. puts non-zero probability $p_{t,h}$ only on experts $h\in \mathcal{H}_t$. The goal is that, for all experts $h^{\star}\in\mathcal{H}$, the performance of the algorithm in the rounds where the experts fire is at least as good as the one of $h^{\star}$. More formally the sleeping regret is:
\begin{equation}\label{eq:sleeping_regret}
    \sleepingregret(h^{\star})=\sum_{t: h^{\star}\in\mathcal{H}_t} \sum_{h\in\mathcal{H}}p_{t,h}\ell_{t,h}-\sum_{t:h^{\star}\in\mathcal{H}_t}\ell_{t,h^{\star}}.
\end{equation}
The goal is to ensure that the average sleeping regret for any sleeping expert $h\in\mathcal{H}$ is vanishing with the number of times the expert fires, $T(h)=\abs{t:h\in\mathcal{H}(t)}$, Multiple algorithms \cite{blum1997empirical,freund1997using,blum2007external, gaillard2014second,DBLP:conf/colt/LuoS15} achieve this goal. Probably the most effective among them is AdaNormalHedge by Luo and Schapire~\cite{DBLP:conf/colt/LuoS15} with an average sleeping regret of $O\prn*{\sqrt{\log(\abs{H})/T(h)}}$. We elaborate on this algorithm in Section~\ref{ssec:negative_IC} in order to discuss its game-theoretic properties.

\paragraph{Subgroup regret formulated via sleeping experts.} Looking closely at the definition of the desired subgroup regret from Eq. \eqref{eq:subgroup_regret} and the one of sleeping regret from Eq. \eqref{eq:sleeping_regret}, there are clear similarities, which motivates formulating our problem as a sleeping experts problem and applying algorithms such as AdaNormalHedge. This leads to the following theorem.
\begin{theorem}\label{thm:subgroup_via_sleeping}
\vspace{0.1in}
Let $\mathcal{A}$ be an algorithm with sleeping regret bounded by $\bigO\prn*{\sqrt{T(h)\cdot \log(\abs{\mathcal{H}}})}$ for any expert $h\in\mathcal{H}$ where $\mathcal{H}$ is any class. Let $\mathcal{G}$ be a set of overlapping groups and $N=\abs{\mathcal{F}}+\sum_{g\in\mathcal{G}} \abs{\mathcal{F}(g)}$. Then $\mathcal{A}$ can provide subgroup regret guarantee of $\bigO\prn*{\sqrt{T(g)\cdot \log(N)}}$ for any $g\in\mathcal{G}$ while ensuring overall regret guarantee of $\bigO\prn{\sqrt{T \log(N)}}$.
\end{theorem}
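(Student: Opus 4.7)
The plan is a direct reduction: instantiate the sleeping experts problem whose ground set $\mathcal{H}$ collects the designer's rules and each group's rules, with firing schedules tailored so that the sleeping regret guarantee instantiates into both the overall and per-group bounds simultaneously.

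Concretely, I would take $\mathcal{H} = \mathcal{F} \sqcup \bigsqcup_{g \in \mathcal{G}} \mathcal{F}(g)$ (disjoint union, duplicating a rule if it appears in several classes), so that $|\mathcal{H}| \leq N$. Each $f \in \mathcal{F}$ is declared to fire at every round, while each $f \in \mathcal{F}(g)$ fires at round $t$ if and only if $g \in \mathcal{G}_t$; when an expert fires it predicts according to the underlying rule applied to example $t$. Then one runs algorithm $\mathcal{A}$ on this sleeping-experts instance, and outputs the same distribution $p_{t,\cdot}$ over $\mathcal{H}_t$ as the learner's distribution over $\mathcal{F}_t$.

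For the overall guarantee, fix any $f^\star \in \mathcal{F}$. Because $f^\star$ fires at every round, $T(f^\star) = T$, and the hypothesized sleeping regret bound gives
\begin{equation*}
\sum_{t=1}^T \sum_{h \in \mathcal{H}_t} p_{t,h}\, \ell_{t,h} - \sum_{t=1}^T \ell_{t,f^\star} \;\leq\; O\bigl(\sqrt{T \log N}\bigr),
\end{equation*}
which is exactly the overall regret from the definition. For the subgroup guarantee on a given $g$, fix any $f^\star_g \in \mathcal{F}(g)$. Since $f^\star_g$ fires precisely on the rounds with $g \in \mathcal{G}_t$, we have $T(f^\star_g) = T(g)$, and the sleeping regret bound instantiates to
\begin{equation*}
\sum_{t:\, g \in \mathcal{G}_t} \sum_{h \in \mathcal{H}_t} p_{t,h}\, \ell_{t,h} - \sum_{t:\, g \in \mathcal{G}_t} \ell_{t,f^\star_g} \;\leq\; O\bigl(\sqrt{T(g) \log N}\bigr),
\end{equation*}
which matches the definition of $\regret(g)$. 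Taking the minimum over $f^\star \in \mathcal{F}$ (respectively $f^\star_g \in \mathcal{F}(g)$) yields the two claimed bounds simultaneously, as the distributions $p_{t,\cdot}$ are chosen by the same run of $\mathcal{A}$.

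There is not much of a hard part: the reduction is structural and relies on the fact that the sleeping regret bound is per-expert, so one simulation of $\mathcal{A}$ yields every desired comparison. The one subtle point I would be careful to spell out is that $\mathcal{F}(g)$ being allowed to include rules that do not make predictions outside $g$ is consistent with the sleeping framework (they simply sleep), and that $\mathcal{H}_t$ is always non-empty provided $\mathcal{F}$ is (so the algorithm is well defined at every round). The bound on $|\mathcal{H}| \leq N$ is where the $\log N$ factor comes from; using a disjoint union rather than set union is deliberate, ensuring each copy of a rule has the correct firing schedule and avoiding any accidental merging across groups.
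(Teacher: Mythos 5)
Your reduction is exactly the paper's: build a sleeping-experts instance with one always-firing copy of each global rule in $\mathcal{F}$ and one group-tagged copy of each rule in $\mathcal{F}(g)$ that fires precisely when $g \in \mathcal{G}_t$ (using disjoint copies to handle rules shared across groups), then invoke the per-expert sleeping regret bound with $|\mathcal{H}| = N$, $T(h)=T$ for global copies and $T(h)=T(g)$ for group copies. The argument is correct and matches the paper's proof, with the added (welcome) care about well-definedness when some experts sleep.
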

\begin{proof}
The idea to bound subgroup regret as a sleeping experts problem is to have each expert $f\in\mathcal{F}(g)$ fire only for members of group $g$, guaranteeing that they experience performance at least as good as its own. One small issue is that we may want to use the same rule $f\in\mathcal{F}_t$ for multiple subgroups; to deal with that, we create different copies of this expert, each associated to one group. 

More formally, we create a set of global sleeping experts $\mathcal{H}$ with one sleeping expert $h\in\mathcal{H}$ for every expert $f\in\mathcal{F}$. These sleeping experts fire every round and ensure the overall regret guarantee. Subsequently we create disjoint sets $\mathcal{H}(g)$ for each group $g\in\mathcal{G}$ where again we create a sleeping expert $h\in\mathcal{H}(g)$ for any expert $f\in\mathcal{F}(g)$. These sleeping experts fire only when the example is a member of group $g$ and hence ensure the subgroup regret guarantee. The eventual sleeping expert set is $\bigcup_{g\in \mathcal{G}} \mathcal{H}(g) \cup \mathcal{H}$. The cardinality of this set is $N=\abs{\mathcal{F}}+\sum_{g\in\mathcal{G}}\abs{\mathcal{F}(g)}$.
This formulation enables to automatically apply sleeping experts algorithms achieving subgroup regret of $\sqrt{T(g)\cdot\log(N})$ for any group $g\in\mathcal{G}$ while also guaranteeing an overall regret of $\sqrt{T\cdot\log(N)}$.
\end{proof}

\begin{remark}\label{rem:adaptively_adding_experts}
\vspace{0.1in} 
In the above formulation, we assumed that all the experts exist in the beginning of the time-horizon. However, the sleeping experts algorithms allow for experts to be added dynamically over time (treating them as not firing in the initial rounds). Hence we can adaptively add new sleeping experts if a group or some third-party entity suggests it, guaranteeing that we do at least as well in the remainder of the game on the members of the group without affecting with the subgroup regret guarantees of other overlapping groups. 
\end{remark}

\subsection{Subgroup regret with one-sided feedback}
\label{sec:sleeping_apples}

We now move our attention to the more realistic setting where we receive the label of the example (and therefore learn the losses of the experts) only if we select the positive outcome (e.g. admit the student to the honors class and observe her performance). This is captured by the so-called \emph{apple tasting} setting which dates back to the work of Helmbold et al. \cite{HelmboldLL92_apples}.

\paragraph{Pay-for-feedback.} Our algorithms opearate in a more challenging model where, at the beginning of each round, the learner needs to select whether to ask for the label. If she asks for it, she receives it at a cost of $1$ instead of the loss of this outcome. Any guarantee for this setting is automatically an upper bound for the apple tasting setting. We can transform any such algorithm to an apple tasting one by selecting the positive outcomes at the rounds that we ask the label and ignoring any extra feedback. The loss of the positive outcome is upper bounded by $1$; since we assume the losses to be bounded in $[0,1]$, the loss in the pay-per feedback model is therefore only larger.

There are a few reasons why we want to work on this more stringent setting instead of the classical apple tasting setting. First, this feedback model makes it easy to create an estimator that is unbiased (since it does not condition on the prediction for the example and therefore our estimates do not suffer from selection bias). Second, in some applications, this model actually is more appropriate; for example, one may need to poll the participant to learn about their experience (which may be independent of whether they were classified as positive). Finally, this serves as an upper bound on the apple tasting setting; as we will see, arguing about lower bounds in this setting is significantly easier. Using the feedback in a more fine-grained way is an interesting open direction.

We now offer three guarantees for sleeping experts in the pay-per feedback setting, which are all achieved via black-box reductions to full-feedback algorithms (either sleeping experts or classical experts). Although the reductions become more involved as we proceed in the paper, no guarantee strictly dominates each other. Our algorithms select random points of exploration (when they ask for the labels to receive feedback). We denote by $\mathcal{E}$ the set of rounds that the algorithm ended up exploring (this is a random variable). For ease of presentation, we assume that we know the size of each demographic group and of all the intersections among groups; if these are not known, we can apply the so called doubling trick (similarly to the way described in Section~\ref{ssec:positive_IC}).

\paragraph{First reduction: Independent classical experts algorithms per intersection.}
The first reduction we provide comes from treating all disjoint intersections between subgroups separately and running separate apple-tasting versions of classical (non-sleeping) experts algorithms on each intersection. Although this provides optimal dependence on the size of each subpopulation, the guarantee has an exponential dependence on the number of different groups. 

For each disjoint intersection $I$ between groups, let $T(I)$ be the size of this intersection and denote by $g\in I$ the case when intersection $I$ includes $g$. Our algorithm splits the examples that lie in this intersection in $(T(I))^{2/3}$ phases, each of which consists of $(T(I))^{1/3}$ examples. At every phase, we select one random point of exploration. Whenever an example comes that belongs in $I$, our algorithm follows the advice of a classical experts algorithm (e.g. multiplicative weights) that is associated to $I$. This experts algorithm is updated at the end of the phase by the sample of the exploration round. This construction is in the spirit of Awerbuch and Mansour~\cite{AwerbuchMansour2003adapting}.

\begin{theorem}\label{thm:first_reduction}
\vspace{0.1in}
Let $\mathcal{A}$ be an algorithm with regret bounded by $\bigO{\sqrt{T\cdot\log(\mathcal{\abs{H}}})}$ when compared to an expert class $\mathcal{H}$, run on $T$ examples and split the examples in disjoint intersections, where each intersection corresponds to a distinct profile of subgroup memberships. For each intersection $I$, randomly selecting an exploration point every $(T(I))^{1/3}$ examples and running separate versions of $\mathcal{A}$ for each $I$ provides subgroup regret on group $g\in\mathcal{G}$ of $$\bigO{\prn*{\prn*{{2^{|\mathcal{G}|}}}^{1/3}\cdot \prn*{T(g)}^{2/3}\cdot \sqrt{\log\prn*{N}}}}$$
where $T(g)=|t:g\in\mathcal{G}(t)|$ is the size of the $g$-population and $N=\abs{\mathcal{F}}+\sum_{g\in\mathcal{G}} \abs{\mathcal{F}(g)}$.
\end{theorem}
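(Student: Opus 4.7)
The plan is to decompose the subgroup regret on $g$ over the disjoint intersection classes that contain $g$, bound the expected regret within each class via an unbiased-estimator analysis of the phased exploration scheme, and recombine through H\"older's inequality.

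First I would partition the $T(g)$ examples of group $g$ according to their full membership profile in $\mathcal{G}$, producing a family of disjoint intersections $\{I : g \in I\}$ of cardinality $k \le 2^{|\mathcal{G}|-1}$ whose sizes satisfy $\sum_{I \ni g} T(I) = T(g)$. Since the comparator in $\regret(g)$ is a single $f^\star \in \mathcal{F}(g)$ applied uniformly across $g$, the basic inequality $\min_{f^\star}\sum_{I \ni g}(\cdot) \ge \sum_{I \ni g}\min_{f^\star_I}(\cdot)$ yields
\begin{equation*}
\regret(g) \;\le\; \sum_{I \ni g}\Bigl[\sum_{t \in I}\hat\ell_t \;-\; \min_{f^\star_I \in \mathcal{F}(g)}\sum_{t \in I}\ell_{t,f^\star_I}\Bigr],
\end{equation*}
so it suffices to bound the per-intersection regret against the best intersection-specific expert.

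Next I would fix one intersection $I$ and analyse the copy of $\mathcal{A}$ running on it (with an expert set of size at most $N$). The scheme partitions the $T(I)$ members of $I$ into $T(I)^{2/3}$ phases of length $T(I)^{1/3}$; within phase $\tau$, the mixed action $p_\tau$ output by $\mathcal{A}$ is held constant across all $T(I)^{1/3}$ rounds, and the exploration point $t^\star_\tau$ is drawn uniformly at random from the phase. The loss vector $\ell_{t^\star_\tau}$ fed back to $\mathcal{A}$ lies in $[0,1]^{|\mathcal{H}_I|}$ and, conditionally on the history up to phase $\tau$, is an unbiased estimator of the phase average $\bar\ell_\tau := T(I)^{-1/3}\sum_{t \in \text{phase } \tau}\ell_t$. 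Because $\mathcal{A}$'s worst-case regret bound $O\bigl(\sqrt{T(I)^{2/3}\log N}\bigr)$ holds pointwise for any $[0,1]$-valued loss sequence, I apply it on the (random) sequence $\{\ell_{t^\star_\tau}\}_\tau$ and pass to expectation, using that $p_\tau$ is a function only of $t^\star_1,\ldots,t^\star_{\tau-1}$, to obtain
\begin{equation*}
\mathbb{E}\Bigl[\sum_\tau \langle p_\tau, \bar\ell_\tau\rangle \;-\; \sum_\tau \bar\ell_{\tau, f^\star_I}\Bigr] \;=\; O\!\Bigl(\sqrt{T(I)^{2/3}\log N}\Bigr).
\end{equation*}
Multiplying through by the phase length $T(I)^{1/3}$ converts this into a per-example regret statement on the exploitation rounds, and adding the $T(I)^{2/3}$ overhead from paying unit cost on exploration rounds bounds the expected per-intersection regret by $O\bigl(T(I)^{2/3}\sqrt{\log N}\bigr)$.

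Finally I would sum this bound over the $k \le 2^{|\mathcal{G}|}$ intersections containing $g$. Concavity of $x\mapsto x^{2/3}$ (equivalently, H\"older's inequality) gives $\sum_{I \ni g} T(I)^{2/3} \le k^{1/3}\bigl(\sum_{I \ni g}T(I)\bigr)^{2/3} \le (2^{|\mathcal{G}|})^{1/3}\,T(g)^{2/3}$, from which the claimed bound follows. The main subtlety to handle carefully is the unbiased-estimator step: I must argue that $\mathcal{A}$'s \emph{adversarial} regret guarantee survives being fed a \emph{stochastic} loss sequence, which works because the bound holds sample-path-wise in $[0,1]^{|\mathcal{H}_I|}$ and because $\ell_{t^\star_\tau}$ is conditionally unbiased for $\bar\ell_\tau$ given $p_\tau$. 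The comparator-splitting in the first step and the H\"older combination in the last are routine once the per-intersection bound is in place.
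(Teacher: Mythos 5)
Your proposal is correct and follows essentially the same route as the paper's proof: decompose the subgroup regret over the disjoint intersections containing $g$, use the uniformly random exploration point (with the algorithm's distribution frozen within each phase) as a conditionally unbiased estimator of the phase-average regret, apply the base algorithm's adversarial regret bound on the exploration rounds and scale by the phase length $(T(I))^{1/3}$, add the $(T(I))^{2/3}$ pay-for-feedback cost, and recombine via H\"older/concavity of $x\mapsto x^{2/3}$. Your explicit treatment of the comparator-splitting step and of why the adversarial guarantee survives the stochastic loss sequence is a slightly more careful rendering of the same argument, not a different one.
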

\begin{proof}[Proof sketch.]
The guarantee follows from three observations formalized in Appendix~\ref{app:first_reduction}.
\begin{enumerate}
\item Among the exploration points, we run a classical experts algorithm so, on these examples, we have a regret guarantee that is square-root of the number of these examples.
\item For each phase, the exploration point is randomly selected and therefore the regret that we incur in the exploration point is an unbiased estimator of the average regret we incur in the whole phase (since the distribution of the algorithm in the phase is the same). As a result, the total regret in a phase is in expectation $(T(I))^{1/3}$ times the regret at the exploration point. 
\item A particular group can have examples in at most $2^{|G|}$ intersections (as this is all the possible membership relationships with respect to the demographic groups). 
\end{enumerate}
\end{proof}

\paragraph{Second reduction: Sleeping experts with fixed exploration phases.}
Aiming to avoid the exponential dependence on the number of groups, we now apply a sleeping experts algorithm such as AdaNormalHedge as our base algorithm. The algorithm described in this part removes this exponential dependence but introduces a dependence on the time horizon and therefore the regret guarantee can be suboptimal for minority populations whose size is significantly smaller than the total population. On the other hand, when all populations are well represented (and are of the same order as the time-horizon) then the guarantee has the optimal dependence on the size of the population without suffering in the number of groups.

The algorithm splits the examples in $T^{2/3}$ phases and selects one random point in each of the phases. Each phase consists in total of $T^{1/3}$ examples but its examples can be distributed differently across it. At the end of the phase, we update a sleeping experts algorithm (e.g. AdaNormalHedge) based on the observations at the exploration point.

\begin{theorem}\label{thm:second_reduction}
\vspace{0.1in}
Let $\mathcal{A}$ be an algorithm with sleeping regret bounded by $\bigO\prn*{\sqrt{T(h)\cdot \log(\abs{\mathcal{H}}})}$
for any expert $h$ in class $\mathcal{H}$. Randomly selecting an exploration point every $T^{1/3}$ examples (irrespective of what groups they come from) and running $\mathcal{A}$ on these points provides subgroup regret on group $g\in\mathcal{G}$ of
$$
\bigO\prn*{T^{1/6}\cdot (T(g))^{1/2}\cdot \sqrt{\log(N)}}
$$
where $T(g)=|t:g\in \mathcal{G}(t)|$ is the size of the group-$g$ population and $N=\abs{\mathcal{F}}+\sum_{g\in\mathcal{G}} \abs{\mathcal{F}(g)}$.
\end{theorem}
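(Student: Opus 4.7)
The plan is to run the sleeping experts algorithm $\mathcal{A}$ on only the exploration subsequence, using the same per-group sleeping construction as in Theorem~\ref{thm:subgroup_via_sleeping}: one always-firing sleeping copy of each $f\in\mathcal{F}$ and one sleeping copy of each $f\in\mathcal{F}(g)$ that fires precisely on group-$g$ examples, for a total of $N$ experts. Since $\mathcal{A}$ is updated only at exploration rounds, its weight vector $w_k$ is constant throughout phase $k$; this is the property that lets me relate per-phase totals to a single random round within that phase.

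First I would establish the unbiasedness identity underlying the whole reduction. Fix a group $g$, a phase $k$, and a comparator $f_g^\star\in\mathcal{F}(g)$, and condition on $w_k$, which is determined by the exploration outcomes of phases $<k$ and hence independent of the exploration index $t_k$. Within the phase the algorithm's per-example loss $\ell_t(\mathcal{A})$ is a deterministic function of $t$, so uniformity of $t_k$ over the $T^{1/3}$ examples of the phase yields
\[
\mathbb{E}\bigl[\ell_{t_k}(\mathcal{A})\,\mathbf{1}\{g\in\mathcal{G}_{t_k}\}\,\bigm|\,w_k\bigr]\;=\;\frac{1}{T^{1/3}}\!\!\!\sum_{t\in\mathrm{phase}(k):\,g\in\mathcal{G}_t}\!\!\ell_t(\mathcal{A}),
\]
and the analogous identity with $\ell_{t,f_g^\star}$ in place of $\ell_t(\mathcal{A})$. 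Summing over phases and taking total expectation, the algorithm's expected group-$g$ cumulative loss, and $f_g^\star$'s group-$g$ cumulative loss, each equal $T^{1/3}$ times their expected totals restricted to the exploration subsequence.

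Second I would apply the assumed sleeping regret bound to $\mathcal{A}$ on that subsequence, giving $O(\sqrt{T_{\mathcal{E}}(g)\log N})$ sleeping regret against $f_g^\star$'s copy, where $T_{\mathcal{E}}(g)=|\{k:g\in\mathcal{G}_{t_k}\}|$. Subtracting the two identities from the first step, taking expectations, and pushing them inside the square root via Jensen's inequality gives an expected subgroup regret bound of $T^{1/3}\cdot O(\sqrt{\mathbb{E}[T_{\mathcal{E}}(g)]\log N})$. Uniform sampling implies $\mathbb{E}[T_{\mathcal{E}}(g)]=T(g)/T^{1/3}$, so this simplifies to $O(T^{1/6}\sqrt{T(g)\log N})$. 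Finally, the pay-for-feedback surcharge contributes at most one per exploration round, i.e.\ an expected $T(g)/T^{1/3}\le T^{1/6}\sqrt{T(g)}$ extra to group $g$ (using $T(g)\le T$), which is absorbed into the main term.

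The main obstacle is the measurability bookkeeping in the first step: I need $t_k$ to be independent of $w_k$ while $w_k$ is updated from the already-realized exploration outcomes of earlier phases. Committing to all exploration indices in advance (one uniform draw per phase, jointly independent) removes any subtlety, after which the conditional-expectation identity is just the expectation of a deterministic function of a uniform random variable. Everything else is the black-box sleeping regret bound and one application of Jensen's inequality.
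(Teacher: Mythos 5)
Your proposal is correct and follows essentially the same route as the paper's proof: split the group-$g$ regret over phases, use the fact that the frozen-within-phase weights make the uniformly chosen exploration round an unbiased estimate of the phase's per-round regret (picking up the factor $T^{1/3}$), apply the black-box sleeping regret bound on the exploration subsequence, push the expectation inside the square root via Jensen's inequality to get $\mathbb{E}[T_{\mathcal{E}}(g)]=T(g)\cdot T^{-1/3}$, and note that the pay-for-feedback cost of the exploration rounds touching $g$ is lower order. Your added bookkeeping (committing to all exploration indices in advance and conditioning on the frozen weights) just makes explicit the independence the paper uses implicitly, so no substantive difference.
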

\begin{proof}[Proof sketch.]
The guarantee follows from two observations formalized in Appendix~\ref{app:second_reduction}.
\begin{enumerate}
\item Given that we run a sleeping experts algorithm across the exploration points, if we just focused on those examples, we simultaneously satisfy regret on them that is square-root of their size.
\item Within any phase the exploration point is selected uniformly at random. As a result, it is an unbiased estimator of the average regret we incur in the whole phase. Note that this is now the average across all rounds and not only rounds where we have members of the particular group which results in the dependence on the time-horizon $T$. 
\end{enumerate}
\end{proof}

\paragraph{Third reduction: Sleeping experts with adaptive exploration phases}
Our final reduction aims to remove the dependence on the time horizon while also avoiding an exponential dependence on the number of groups. Towards this end, we make the size of the phases adaptive based on the sizes of the populations. Our guarantee has both the aforementioned desired properties. On the negative side, the exponent on the group size is suboptimal (see discussion in the end of the section).

We again use a sleeping experts algorithm $\mathcal{A}$ across phases but phases are now designed adaptively. At the beginning of each phase $r$, we initialize a counter per group to capture the number of examples we have seen from it in phase $r$. When an example arrives, we increase the corresponding counters for all groups related to the example (possibly the example belongs in multiple groups; then we increase all the corresponding counters simultaneously). The phase ends when one of the groups $g\in\mathcal{G}$ has received $(T(g))^{1/4}$ examples in this phase. 

At the beginning of a phase $r$, we draw for each group $g\in\mathcal{G}$ a uniform random number $X(g,r)\in\crl{1,2,\dots,T(g)}$. This determines the exploration round for group $g$ at phase $r$; let $t(r,g)$ be the random variable determining the time that the $X(g,r)$-th example (after time $\tau_r$) from group $g$ will arrive. If the phase ends before this example arrives, i.e. $t(r,g)>\tau_r$ then we associate this phase with $0$ estimated losses for group $g$: $\tilde{\ell}_f(r,g)=0$ for all $f\in\mathcal{F}(g)$. Otherwise, the estimated loss corresponding to the phase is the loss at the exploration point, i.e. $\tilde{\ell}_f(r,g)=\ell_{t(r,g),f}$. Since the phase ends once any group counter reaches the upper bound on examples for its phase, this means that we may not reach the exploration point for some other groups (this is the reason why we may have $t(r,g)>\tau_{r+1}$\tledit{)}. Before proceeding to the next phase, we feed the estimated losses $\tilde{\ell}(r,g)$ to $\mathcal{A}$.

\begin{theorem}\label{thm:third_reduction}
\vspace{0.1in}
Applying the above algorithm provides subgroup regret on group $g\in\mathcal{G}$ of
$$
\bigO\prn*{|\mathcal{G}|\cdot (T(g))^{3/4}\cdot\sqrt{\log(N)}}
$$
where $T(g)=|t:g\in\mathcal{G}(t)|$ is the size of the $g$-population and $N=\abs{\mathcal{F}}+\sum_{g\in\mathcal{G}} \abs{\mathcal{F}(g)}$.
\end{theorem}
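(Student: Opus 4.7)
\textbf{Proof plan for Theorem~\ref{thm:third_reduction}.} The plan is to lift the sleeping-experts reduction of Theorem~\ref{thm:subgroup_via_sleeping} from the per-example to the per-phase level, with the estimator $\tilde{\ell}(r,g)$ playing the role of the true loss signal fed to a full-feedback sleeping-experts algorithm $\mathcal{A}$. There are three ingredients: unbiasedness of the estimator, a phase-level sleeping-regret bound via AdaNormalHedge, and a scaling-plus-counting argument that trades exploration cost against estimation error to produce the $(T(g))^{3/4}$ rate.

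I would first verify \emph{unbiasedness}. Let $n(r,g)$ denote the number of group-$g$ examples in phase $r$ and let $S_f(r,g)=\sum_{t\in\text{phase }r,\,g\in\mathcal{G}_t}\ell_{t,f}$. Because $X(g,r)$ is drawn uniformly from $\crl*{1,\dots,T(g)}$, the round $t(r,g)$ falls inside phase $r$ with probability $n(r,g)/T(g)$ and, conditional on that event, is uniform over the $n(r,g)$ group-$g$ examples of phase $r$. A direct computation gives $\mathbb{E}\bigl[\tilde{\ell}_f(r,g)\mid\text{phase structure}\bigr]=S_f(r,g)/T(g)$, so that a suitably rescaled $\tilde{\ell}_f(r,g)$ is an unbiased estimator of the actual per-phase per-group loss.

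Next I would repeat the sleeping-experts construction of Theorem~\ref{thm:subgroup_via_sleeping} at the phase level: create one sleeping expert $h_{f,g}$ per pair $(f\in\mathcal{F}(g),g\in\mathcal{G})$ and one global sleeping expert per $f\in\mathcal{F}$, declare $h_{f,g}$ active in every phase containing a group-$g$ example, and feed it loss $\tilde{\ell}_f(r,g)\in[0,1]$. Invoking AdaNormalHedge then yields, for any $f^{\star}\in\mathcal{F}(g)$,
\[
\sum_{r:\,n(r,g)\ge 1}\Bigl[\sum_{h} q_{r,h}\tilde{\ell}_h(r,g)\;-\;\tilde{\ell}_{f^{\star}}(r,g)\Bigr] \;\le\; \bigO\!\prn*{\sqrt{P(g)\log N}},
\]
where $P(g)$ is the number of phases containing at least one group-$g$ example and $q_r$ is the phase-constant mixture $\mathcal{A}$ plays throughout phase $r$.

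Finally, I would translate the phase-level bound back to the actual subgroup regret. Within phase $r$ the true $g$-regret equals $\sum_f q_{r,f}S_f(r,g)-S_{f^{\star}}(r,g)$; summing over phases, taking expectations, and applying the unbiasedness from the first step together with Jensen's inequality in the easy direction for $\min_{f^{\star}}$ writes $\regret(g)$ as the rescaling factor times the phase-level bound, plus the cost of exploration at each query. The adaptive phase threshold of $(T(g))^{1/4}$ is what balances the two contributions, forcing both $P(g)$ and the number of $g$-exploration points to $O((T(g))^{3/4})$, which produces the $(T(g))^{3/4}\sqrt{\log N}$ rate; the $|\mathcal{G}|$ factor enters because a single $g$-example may be chosen as an exploration point by up to $|\mathcal{G}|$ different groups (each drawing its own $X(\cdot,r)$), with each such selection costing one unit under pay-for-feedback. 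The main obstacle is precisely this translation step: the rescaling factor applied to the phase-level regret is large, so one must argue carefully that (i) $q_r$ is conditionally independent of the phase-$r$ randomness in $\tilde{\ell}(\cdot,g)$, which is immediate against an oblivious adversary but requires a martingale decomposition otherwise, and (ii) the high variance of the single-point estimator can be absorbed rather than amplified. The suboptimality of the $(T(g))^{3/4}$ exponent noted at the end of the section is in fact forced by exactly this variance-versus-scaling tradeoff.
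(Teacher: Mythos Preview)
Your high-level architecture---run a full-feedback sleeping-experts algorithm at the phase level on the signals $\tilde{\ell}(r,g)$, then lift back to per-example regret via unbiasedness plus an exploration-cost term---is exactly the paper's approach. The gap is in the accounting that turns this into $(T(g))^{3/4}$. Your claim that the adaptive threshold forces $P(g)=O\!\prn*{(T(g))^{3/4}}$ is not true in general: the phase terminates whenever \emph{any} group hits its threshold, so a smaller overlapping group $g'$ can trigger many phase endings in which $g$ has only one or two examples, driving $P(g)$ all the way up to its trivial bound $T(g)$. With the unbiasedness you state, $\mathbb{E}[\tilde{\ell}_f(r,g)]=S_f(r,g)/T(g)$, the rescaling factor is $T(g)$, and $T(g)\cdot\sqrt{P(g)\log N}$ is then far too large for any achievable $P(g)$; the numbers simply do not close.

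The paper sidesteps this by a different normalization. It pads every phase out to exactly $(T(g))^{1/4}$ group-$g$ examples with zero-loss dummies (the maximum a phase can contain before $g$ itself would terminate it), so the exploration index is uniform over a fixed-length window and $\tilde{\ell}_f(r,g)$ is unbiased for the padded-phase \emph{average}. That makes the rescaling factor only $(T(g))^{1/4}$, and combining with the crude bound $P(g)\le T(g)$ already yields $(T(g))^{1/4}\sqrt{T(g)\log N}=(T(g))^{3/4}\sqrt{\log N}$. Separately, your one-line derivation of the $|\mathcal{G}|$ factor (``each $g$-example can be queried by up to $|\mathcal{G}|$ groups'') is too coarse: that bound alone gives $|\mathcal{G}|\cdot T(g)$ pay-for-feedback cost. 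The paper needs a case split---for $T(g')\le T(g)$ the total $g'$-explorations are at most $(T(g'))^{3/4}\le (T(g))^{3/4}$, while for $T(g')>T(g)$ each $g$-example is $g'$-queried with probability $(T(g'))^{-1/4}\le (T(g))^{-1/4}$---to show every other group contributes at most $(T(g))^{3/4}$.
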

\begin{proof}[Proof sketch]
There are three components to prove this guarantee, formalized in Appendix~\ref{app:third_reduction}.
\begin{enumerate}
    \item The number for relevant phases of each group (phases where they have at least one example) is at most $T(g)$. This provides an upper bound on the number of phases that we need to consider with respect to group $g$.
    \item Using a similar analysis as before, we can create a guarantee about the regret we are incurring in the exploration points and multiply it by the $(T(g))^{1/4}$ which is the size of the phase. This would have created a completely unbiased estimator if there was no overlap with other groups.
    \item A final complication is that exploration may occur due to other groups so we need to understand how much we lose there. For that, we observe that smaller groups are explored with higher probability (the interaction with larger groups does not therefore significantly increase their probability of inspection). On the other hand, larger groups do not often collide with significantly smaller groups due to the latters' size.
\end{enumerate}
\end{proof}

\paragraph{On the optimality of the bounds.} In the pay-for-feedback model, even if we do not work in a sleeping experts setting, the best that one can hope for is a guarantee of $T^{2/3}$. If we explore $T^{a}$ examples, we obtain a regret of $T^{a/2}$ on them. If the estimator is unbiased, we need to then multiply this with $T^{1-a}$, which gives a regret of $T^{1-a/2}$. Since we pay for feedback, we also lose $T^{a}$. The maximum of the two is minimized when $a=2$. As a result, even if the groups were disjoint, we cannot hope for a better subgroup regret than $(T(g))^{2/3}$. Note that our results do not quite achieve this bound: having either a multiplicative term exponential in the number of groups (Theorem~\ref{thm:first_reduction}) or having a portion of the regret bound be in terms of the total time $T$ rather than $T(g)$ (Theorem~\ref{thm:second_reduction}). Achieving a bound of $(T(g))^{2/3}$ without enumerating over all possible disjoint intersections across groups is therefore an interesting open direction.

\section{A game-theoretic interpretation}
\label{sec:incentives}
In this section, we provide a game-theoretic interpretation of the above subgroup regret guarantee, connecting it to the notions of Individually Rationality (IR) and Incentive Compatibility (IC).

\paragraph{Individual Rationality.} In game theory, a mechanism is considered Individually Rational when the participants prefer to stay and be served in the system rather than to leave and use their best outside option. Consider each subgroup as a player in a game and the cost experienced by this player as the total loss in all its members; e.g., imagine each group has a representative who looks out for their best interests. This representative has access to the rules in $\mathcal{F}(g)$ (private type) and can opt to defect from the global learning system and create its own predictor with only rules in $\mathcal{F}(g)$. 

We say that a learning algorithm induces an IR mechanism if no group has significant incentive to opt out. (We cannot require zero incentive since the system needs some time to learn.)

The subgroup regret guarantee can be thought as an asymptotic version of the individual rationality property.  The guarantee ensures that the average benefit from being served outside of the system vanishes as the group size grows as formalized below.
\begin{definition}
\vspace{0.1in}
A learning algorithm induces an asympotically individual rational (IR) mechanism if no group gains (asymptotically) by getting served outside of the system.
$$
\forall g\in\mathcal{G}: 
\sum_{t:g\in\mathcal{G}_t} \sum_{f\in\mathcal{F}_t} p_{t,f}\ell_{t,f} - \min_{f^{\star}\in\mathcal{F}(g)}\sum_{t:g\in\mathcal{G}_t}\ell_{t,f^{\star}}=o\prn*{T(g)}.
$$
\end{definition}

\paragraph{Incentive Compatibility.} A second desired game-theoretic notion is that of Incentive Compatibility which states that the player has no incentive to misreport her true type. In our context, we define the type of a group to be the set $\mathcal{F}(g)$ of experts that it knows about, and IC means that group $g$ could not achieve enhanced performance for the group by hiding a subset of $\mathcal{F}(g)$ and removing it from the global learning process. Recall that $\hat{\ell}_t(\mathcal{A})=\sum_{f\in\mathcal{F}(t)}p_f^t\ell_f^t$ denote the expected loss of the algorithm $\mathcal{A}$ at round $t$. 

We say that a learning algorithm induces an IC mechanism if removing a subset of group-based experts does not improve the performance of the group. More formally, let $\mathcal{A}(\emptyset)$ denote the algorithm running with all the experts and nothing removed and $\mathcal{A}(\crl{g,H})$ denote the algorithm running with the subset $H\in\mathcal{F}(g)$ removed from the group-based experts and also from the overall set $\mathcal{F}$. Then:

\begin{definition}
\vspace{0.1in}
A learning algorithm $\mathcal{A}$ induces an asympotically incentive compatible (IC) mechanism if no group gains (asymptotically) by hiding a subset of its experts.
$$
\forall g\in\mathcal{G}, \forall H\in\mathcal{F}(g): 
\sum_{t:g\in\mathcal{G}(t)}\hat{\ell}_t\prn*{\mathcal{A}(\emptyset)}- \sum_{t:g\in\mathcal{G}(t)} \hat{\ell}_t\prn*{\mathcal{A}(\crl{g,H})}= o\prn*{T(g)}.
$$
\end{definition}

As before, we cannot hope to achieve the IC property exactly as extra experts will definitely delay the learning process, but we would like to satisfy an approximate version of this property where the average benefit from removing some experts vanishes as the group size grows. This is a desirable property as, when satisfied, it suggests that the groups should not overly think about potential adverse effects of suggesting particular predictors but instead provide all their proposed rules.

\subsection{Roadblocks in applying sleeping experts towards Incentive Compatibility}\label{ssec:negative_IC}
In this section, we show a strong negative result about classical sleeping experts algorithms with respect to the IC property. To make this formal, we present the result for the AdaNormalHedge algorithm of Luo and Schapire \cite{DBLP:conf/colt/LuoS15} but a similar intuition carries over to other sleeping experts algorithms (see Section~\ref{ssec:open_IC}).

\paragraph{AdaNormalHedge.} AdaNormalHedge \cite{DBLP:conf/colt/LuoS15} is an algorithm with strong adaptive regret guarantees. Its sleeping experts version starts with a set of experts with cardinality $N$ and a prior distribution $q$ that is typically initialized uniformly: $q_i=1/N$ for all $i\in\brk{N}$. Every expert keeps two quantities $R_{t,i}$ capturing the total regret it has experienced so far in the rounds that it fired and $C_{t,i}$ capturing the desired regret guarantee. These parameters determine the weight of its expert which is expresssed using a potential function $\Phi(R,C)=\exp(\frac{\max(0,R)^2}{3C})$ giving rise to a weight function: $$w(R,C)=\frac{1}{2}\prn*{\Phi(R+1,C+1}-\Phi(R-1,C+1).$$
More formally, both the expert quantities are initialized to $0$, i.e. $R_{0,i}=C_{0,i}=0$. At round $t=1\dots T$, a set $A(t)$ of experts is activated and the learner predicts with probability proportional to the weight of its firing expert: $p_{t,i}\propto q_i \cdot w(R_{t-1,i},C_{t-1,i})\cdot \mathbf{1}\crl*{i\in F(t)}$.\footnote{Luo and Schapire \cite{DBLP:conf/colt/LuoS15} predict arbitrarily if all weights are $0$; we commit on selecting uniformly at random then.} The adversary then reveals the loss vector $\ell_t$ and the learner suffers loss $\hat{\ell}_t=\sum_{i\in\brk{N}} p_{t,i}\ell_{t,i}$. This gives rise to an instantaneous regret for each firing expert: $r_{t,i}=\prn*{\hat{\ell_t}-\ell_{t,i}}\cdot \mathbf{1}\crl{i\in A(t)}$ which is used to update the expert parameters: $R_{t,i}=R_{t-1,i}+r_{t,i}$ and $C_{t,i}=C_{t-1,i}+\abs{r_{t,i}}$, before proceeding to the next round.

The regret of this algorithm with respect to an expert $i\in\brk{N}$ is roughly of order $\sqrt{C_{T,i} \log(N)}$ which in the sleeping experts version gives a sleeping regret $\sqrt{\abs{t:i\in A(t)}\log(N)}$. This is why using such an algorithm for subgroup regret guarantees provides a guarantee of $\sqrt{T(g)\log(N)}$, ignoring constants.

\paragraph{The IC lower bound.} Although the subgroup regret guarantee that sleeping experts provide makes them satisfy an asymptotic version of the IR property, we now show that this is not the case for the IC property; we illustrate this for AdaNormalHedge and further discuss it in Section~\ref{ssec:open_IC}.

\begin{theorem}\label{thm:negative_IC}
\vspace{0.1in}
AdaNormalHedge does not induce an asymptotically incentive compatible mechanism, i.e. there exists an instance where a group can asymptotically benefit from hiding one of its experts.
\end{theorem}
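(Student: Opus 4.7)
The plan is to exhibit a concrete instance of the online sleeping-experts game in which AdaNormalHedge's total expected loss on group-$g$ rounds, when run with the full expert set, exceeds the loss it would incur with a single expert hidden from $\mathcal{F}(g)$ by an amount that is not $o(T(g))$. The core idea is to design the loss sequence and the surrounding experts so that a ``decoy'' expert in $\mathcal{F}(g)$ retains a constant share of AdaNormalHedge's probability throughout the horizon, forcing the group to absorb its bad predictions on a constant fraction of rounds; removing the decoy reallocates that probability onto a genuinely better expert and lowers the group's cumulative loss.

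I would first set up a minimal instance: a single group $g$ with $T(g)=T$, $\mathcal{F}(g)=\{f_g, f'_g\}$, and a small auxiliary set of globally-firing experts $\mathcal{F}$ whose role is to shape the normalization that AdaNormalHedge applies across firing experts. The loss sequence is constructed so that $f_g$ is consistently the best expert on group-$g$ rounds, yet the cumulative regret $R_{t,f'_g}$ tracked by AdaNormalHedge is kept close to the regrets of the other firing experts --- because the global experts and $f'_g$ suffer losses in a correlated way --- so that the potential $\Phi(R,C)$ does not push $w(R_{t,f'_g},C_{t,f'_g})$ to zero. The construction should make the ratio $w(R_{t,f_g},C_{t,f_g})/w(R_{t,f'_g},C_{t,f'_g})$ remain bounded and therefore $p_{t,f'_g}=\Omega(1)$ on all sufficiently late rounds, a purely mechanical calculation using the explicit formula $w(R,C)=\tfrac12(\Phi(R+1,C+1)-\Phi(R-1,C+1))$.

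With that invariant in hand, the second half of the argument is a direct comparison. On the rounds where $f_g$ is correct and $f'_g$ is wrong, AdaNormalHedge under the full expert set incurs per-round expected loss at least $c\cdot 1$ for some $c>0$ (coming from the decoy's bounded-below weight), giving total loss $\Omega(T)$ beyond what $f_g$ alone would accumulate on those rounds. In the hidden variant $\mathcal{A}(\{g,\{f'_g\}\})$, the sleeping-experts regret of AdaNormalHedge against $f_g$ (the only group-specific expert left firing on group-$g$ rounds) gives loss at most $\mathrm{OPT}_g+O(\sqrt{T(g)\log N})=o(T(g))$ above $f_g$'s loss. Subtracting the two expressions yields a gap of $\Omega(T(g))$, contradicting the asymptotic IC inequality.

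The main obstacle is step 2: producing a loss sequence on which AdaNormalHedge simultaneously (i) keeps its weight on the decoy expert bounded away from zero for a linear number of rounds and (ii) routes this weight through rounds on which the decoy is actually costing the group. The tension is that AdaNormalHedge has strong adaptive regret guarantees against any fixed expert, so whenever $f'_g$ consistently underperforms $f_g$ its cumulative regret becomes strongly negative and its weight collapses --- a naive adversary cannot pin both properties at once. Resolving this requires orchestrating the global experts in $\mathcal{F}$ so that the algorithm's own realized loss $\hat\ell_t$ sits close to $\ell_{t,f'_g}$ (keeping $r_{t,f'_g}\approx 0$) even on rounds where $\ell_{t,f_g}$ is much smaller; making the bookkeeping around $R_{t,i}$, $C_{t,i}$ work out so that both experts retain comparable weights, while the group still demonstrably benefits from hiding $f'_g$, is the crux of the proof.
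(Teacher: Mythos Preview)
Your approach has a gap that cannot be repaired within a single-group framework. You want the decoy $f'_g$ to retain a constant probability share while being strictly worse than $f_g$ on $\Omega(T)$ rounds. But $f_g$ fires on every round of group $g$ (which in your setup is every round), so AdaNormalHedge's own sleeping-regret guarantee against $f_g$ already gives $\sum_{t}\hat\ell_t\le \sum_t \ell_{t,f_g}+O(\sqrt{T\log N})$ \emph{regardless of which other experts are present}. Hence the algorithm's loss with the full expert set is within $o(T(g))$ of $f_g$'s loss, and hiding $f'_g$ cannot improve it by more than $o(T(g))$. Your proposed fix---forcing $\hat\ell_t\approx \ell_{t,f'_g}$ via the global experts---directly contradicts this bound whenever $\ell_{t,f_g}$ is much smaller on a linear number of rounds. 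You flagged this as a ``bookkeeping'' difficulty, but it is an impossibility: the very regret guarantee you invoke for $\mathcal{A}(\{g,\{f'_g\}\})$ also applies to $\mathcal{A}(\emptyset)$.

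The paper's construction sidesteps this obstruction by using \emph{two} overlapping groups $S\subset B$ and hiding the larger group's expert $f(B)$. The point is that removing $f(B)$ lets the algorithm implicitly realize the piecewise policy ``use $f(S)$ on $S$ and the global expert $f$ on $B\setminus S$,'' which is strictly better on $B$ than any single expert that fires on all of $B$---in particular, better than $f(B)$ itself. When $f(B)$ is present, the global expert $f$'s cumulative regret $R_{t,f}$ is driven negative (because $f$ is catastrophic on $S$), its AdaNormalHedge weight collapses to zero, and on $B\setminus S$ the algorithm is forced onto $f(B)$ even though $f$ is better there. No sleeping-regret guarantee is violated because the comparator for $B$ is a single expert firing on all of $B$, not a per-subgroup combination. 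Your construction needs this second group and the asymmetry in where experts fire; without it, the sleeping-regret bound against the best group-$g$ expert blocks the argument outright.
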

\begin{proof}
Consider a setting with two groups where the bigger group $B$ consists of the whole population whereas the smaller group $S$ corresponds to half of the population.  Every odd round an example from $S$ arrives and every even round an example from $B \setminus S$ arrives. The algorithm has access to one global expert $\mathcal{F}=\crl{f}$ as well as one group-specific expert per group: $\mathcal{F}(g)=\crl{f(g)}$ for $g\in\crl{B,S}$. 
\begin{itemize}
    \item Both group-specific experts have always loss $\ell_{t,f(g)}=0.2$ if $g\in \mathcal{G}(t)$.
    \item The global expert is really bad at predicting group $S$: $\ell_{t,f}=1$ if $g\in S$ but makes no mistakes on the remaining population: $\ell_{t,f}=0$ if $g\in B\setminus S$.
\end{itemize} 

The high-level idea on why IC does not hold is the following. Group $B$ prefers to use expert $f(S)$ on members of group $S$ and expert $f$ on $B\setminus S$; this is achieved if it hides expert $f(B)$. However, if it does not, AdaNormalHedge ends up using often expert $f(B)$ on $B \setminus S$ which leads to a much higher loss. Intuitively, since $f(B)$ and f make predictions on exactly the same set of examples and $f(B)$ has lower total loss, then $f(B)$ gets much higher weight than $f$ ---  the algorithm therefore uses $f(B)$ instead of f on $B\setminus S$. 

More formally, suppose first that group $B$ hides expert $f(B)$. The sleeping regret guarantee for expert $f(S)$ guarantees that
expert $f(S)$ is selected in all but a vanishingly small number of rounds. As a result, asymptotically, the loss accumulated from group $S$ is  $0.2\cdot (T/2)$. Given that $f(B)$ is not an option (as it is hidden), in members $B\setminus S$ the algorithm can only select expert $f$ incurring $0$ loss.

We now show that, if $f(B)$ is not hidden, it incurs more loss in members of $B\setminus S$ without gaining on $S$. In this case, we show below that expert $f$ is selected with probability $p_{t,f}\leq 1/2$ after a few initial rounds. This leads to an additional loss of at least $0.2\cdot (T/4)$ on examples of $B\setminus S$ since we select expert $f(B)$ which has higher loss than $f$ on these examples in at least half of the even rounds in expectation. As a result, by not hiding expert $f(B)$, the algorithm selects it often which leads in an additional loss that is linear in $T$ -- this directly implies that group $B$ is better off by hiding this expert and enhancing the overall performance.

What is left is to show why the probability of selecting expert $f$ is indeed $p_{f,t}\leq 1/2$ after a few inital steps.  The sleeping regret guarantee for expert $f(S)$ implies that the cumulative (across rounds) probability of selecting expert $f$ on examples in $S$ until round $t$ is at most $\sqrt{t}$ up to constants and log factors. As a result, the expected loss of the algorithm until round $t$ is at most $\hat{\ell}_t\leq 0.2\cdot t/2+\sqrt{t}$ and the total instantaneous regret of experts $f$ and $f(B)$ on odd rounds (i.e., members of group $S$) is $$\sum_{\substack{\tau\leq t \\\tau \textrm{ is odd}}}r_{\tau,f}\geq 0.8\cdot t/2-\sqrt{t} \qquad \textrm{and} \qquad \sum_{\substack{\tau\leq t \\\tau \textrm{ is odd}}}r_{\tau,f(B)}\leq \sqrt{t} \qquad \textrm{respectively.}$$

On even rounds (i.e., members of $B\setminus S$), the algorithm selects either $f$ or $f(B)$; therefore its loss is at most $0.2$. This means that the instantaneous regret on these rounds is:
$$\sum_{\substack{\tau\leq t \\\tau \textrm{ is even}}}r_{\tau,f}\geq -0.2\cdot t/2 \qquad \textrm{and} \qquad \sum_{\substack{\tau\leq t \\\tau \textrm{ is even}}}r_{\tau,f(B)}\leq 0.2\cdot t/2 \qquad \textrm{respectively.}$$
As a result, after a few initial rounds, the cumulative instantaneous regret of $f$ is consistently negative and also smaller than the one of $f(B)$. This means that, by the construction of the potential function, the weight of $f$ is $0$ as $\Phi(R_{f,t+1}+1,C+1)=\Phi(R_{f,t+1}-1,C+1)=1$. Since $\Phi$ is an increasing function on its first argument and the probability is proportional on the weight, this means that $f$ has the smallest probability across all other experts who fire and therefore has probability of being selected at most $1/2$ which is what we wanted to show. Note that, when $f(B)$ is hidden, the weight still becomes $0$ but now $f$ is the sole alternative and is therefore always selected in members of $B\setminus S$.
\end{proof}
The reason why this negative result arises is that, by hiding a subset of the experts, the group can potentially lead the algorithm to use different experts in different disjoint parts of the population. Sleeping experts algorithms penalize each expert for its overall performance at times when it fires and does not distinguish disjoint subpopulations where it performs much better.

\subsection{Incentive Compatibility in a computationally inefficient way}\label{ssec:positive_IC}
\paragraph{Multiplicative weights for each disjoint subgroup.} We now turn our attention to the use of separate algorithms for each disjoint intersection of groups. This suffers from an exponential dependence on the number of groups but satisfies both IR and IC. In some sense, it therefore serves as an existential proof that satisfying these quantities in an online manner is feasible and creates an intriguing open question of whether this can be achieved in a computationally efficient manner. 

We run separate multiplicative weights algorithms for each disjoint group intersection. More formally, for every $S\in 2^{\abs{\mathcal{G}}}$, we run sub-algorithm $\mathcal{A}(S)$ on examples $t$ where $g\in S$ if and only if $g\in\mathcal{G}_t$. We denote by $\mathcal{A}_t$ this sub-algorithm. We assume that experts are not added adaptively for this part (if a new expert appears then we reinitialize the algorithm).
The sub-algorithm $\mathcal{A}(S)$ has as experts all experts that fire at examples associated with $S$ so all members of $\mathcal{F}$ or $\mathcal{F}(g)$ for some $g\in S$. We let $N(\mathcal{A})$ denote the number of those experts for ub-algorithm $\mathcal{A}$.

For sub-algorithm $\mathcal{A}$, each expert $i$ is initialized with weight $w_{t,i}(\mathcal{A})=1$. The algorithm selects experts proportionally to the weights in the corresponding sub-algorithm, i.e. $p_{t,i}=\frac{w_{t,i}(\mathcal{A}_t)}{\sum_{j\in\mathcal{F}(S)}w_{t,j}(\mathcal{A}_t)}$. We then multiplicatively update weights with learning rate $\eta$ only for $\mathcal{A}_t$: $w_{t+1,i}(\mathcal{A})=w_{t,i}\cdot (1-\eta)^{\ell_{t,i}\cdot \mathbf{1}\crl{\mathcal{A}=\mathcal{A}_t}}$. Denoting by $T(\mathcal{A})$ the size of the disjoint subgroup, we should let $\eta=\sqrt{\log(N(\mathcal{A}))/T(\mathcal{A})}$ for a guarantee sublinear to this size. To deal with the fact that we do not know the size of each disjoint subgroup in advance we apply the so called doubling trick, assuming that it is $2^r$ (initial $r=2$) and reinitializing the algorithm by increasing $r$ -- this can happen at most $\log(T)$ times.

This algorithm satisfies the vanishing subgroup regret property (thus also the asymptotic IR property) as any group consists of multiple disjoint subgroups and the group has vanishing regret within each of them; other than the regret term, it cannot hope to do better if it is served outside of the system with its own functions. We now show that these separate multiplicative weights algorithms also achieve the asymptotic IC property. For that, we use a nice property of multiplicative weights establishing that multiplicative weights not only does not do worse than the best expert, but it also does not do better. This was first formalized by Gofer and Mansour \cite{Gofer2016lower} and was used in a fairness context by Blum et al. \cite{BlumGuLySr18} to establish that equality of average losses across different groups is preserved when combining experts that have equally good performance across these groups.
\begin{theorem}\label{thm:positive_IC}
\vspace{0.1in}
Running separate multiplicative weights algorithms for each disjoint intersection among groups induces an asymptotically IC mechanism, i.e. no group can asymptotically benefit by hiding any of its experts.
\end{theorem}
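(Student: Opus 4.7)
The plan is to exploit the decomposition of the algorithm across disjoint intersections together with a two-sided regret bound for multiplicative weights. Hiding a subset $H\subseteq\mathcal{F}(g)$ only affects the sub-algorithms $\mathcal{A}(S)$ whose intersection $S$ contains $g$, since for every other intersection the expert pool, the sequence of examples, and the randomness are unchanged. Hence it suffices to prove that for each such $S$ the expected loss of $\mathcal{A}(S)$ on its $T(S)$ examples is, up to sublinear terms, at least as small as the expected loss of the algorithm that plays over the reduced pool $\mathcal{F}(S)\setminus H$.

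The key ingredient is the two-sided guarantee of vanilla multiplicative weights. The standard analysis gives, for any sub-algorithm $\mathcal{A}$ run on its $T(\mathcal{A})$ examples, an \emph{upper} bound
\[
L(\mathcal{A})\;\le\;\min_{i\in\mathcal{F}(\mathcal{A})}L_i(\mathcal{A})\;+\;O\!\left(\sqrt{T(\mathcal{A})\log N(\mathcal{A})}\right),
\]
where $L(\mathcal{A})$ is the expected cumulative loss of $\mathcal{A}$ and $L_i(\mathcal{A})$ is the loss of expert $i$ over the same rounds. The Gofer--Mansour no-negative-regret lemma \cite{Gofer2016lower}, which the paper already invokes, gives the matching \emph{lower} bound
\[
L(\mathcal{A})\;\ge\;\min_{i\in\mathcal{F}(\mathcal{A})}L_i(\mathcal{A})\;-\;O\!\left(\sqrt{T(\mathcal{A})\log N(\mathcal{A})}\right).
\]
First I would apply the upper bound to $\mathcal{A}(S)$ run with the full expert pool and the lower bound to $\mathcal{A}(S)$ run with the reduced pool $\mathcal{F}(S)\setminus H$; then I would use the trivial observation that removing experts can only increase the minimum loss, i.e.\ $\min_{i\in\mathcal{F}(S)\setminus H}L_i(S)\ge \min_{i\in\mathcal{F}(S)}L_i(S)$. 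Combining these three inequalities telescopes the minimum-loss terms and yields
\[
L\bigl(\mathcal{A}(S);\,\emptyset\bigr)\;-\;L\bigl(\mathcal{A}(S);\,\{g,H\}\bigr)\;\le\;O\!\left(\sqrt{T(S)\log N}\right).
\]

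To finish, I would sum this intersection-wise bound over every $S$ with $g\in S$. There are at most $2^{|\mathcal{G}|-1}$ such intersections and $\sum_{S\ni g}T(S)=T(g)$, so by Cauchy--Schwarz
\[
\sum_{S\ni g}\sqrt{T(S)\log N}\;\le\;\sqrt{2^{|\mathcal{G}|-1}\,T(g)\log N}\;=\;o\bigl(T(g)\bigr)
\]
as $T(g)\to\infty$, which is exactly the asymptotic IC condition in the definition. The doubling trick used to guess $T(\mathcal{A})$ contributes only an extra $O(\log T)$ factor in both the upper and lower bounds since it restarts the algorithm at most $\log T$ times and each phase still satisfies the two-sided guarantee; this leaves the $o(T(g))$ conclusion intact.

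The main obstacle is making sure the lower bound, and not only the upper bound, survives the doubling restarts and the fact that the algorithm is randomized: the Gofer--Mansour argument needs to be applied per-phase and summed, and the comparison between the two worlds (with and without $H$) must be done phase-by-phase because the sequence of examples in $S$ is identical across the two coupled runs. Once that bookkeeping is carefully set up, the argument reduces to the clean algebra above.
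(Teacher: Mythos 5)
Your proposal is correct and follows essentially the same route as the paper's proof: decompose over the disjoint intersections, use the two-sided multiplicative-weights guarantee (the Gofer--Mansour ``no better than the best expert'' bound, invoked in the paper via Theorem 3 of Blum et al.) per fixed-learning-rate epoch of the doubling trick, and note that removing experts can only increase the best comparator's loss, then sum the sublinear terms over intersections. Your explicit Cauchy--Schwarz aggregation and phase-by-phase coupling are just more detailed versions of the steps the paper sketches.
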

\begin{proof}
One essential step of the proof is the property that multiplicative weights with a fixed learning rate has performance almost equal to the one of the best expert. This is exactly shown in the proof of Theorem 3 in \cite{BlumGuLySr18} which establishes that for any sub-algorithm and any fixed learning rate, the performance of the sub-algorithm is equal to the performance of the best expert $L^{\star}$ in these rounds plus, up to constants, $\eta\cdot L^{\star}+\log(N(\mathcal{A}))/\eta$. Since we run each sub-algorithm with a fixed learning rate for at most $2^r$ rounds, this is, up to constants less than $\sqrt{2^{r}\log(N(\mathcal{A}))}$. Summing across all $r=2\dots T(\mathcal{A})$, this is at most $\sqrt{T(\mathcal{A})\log(N(\mathcal{A}))}$.

To now establish the asymptotic IC property, note that the intervals that have fixed learning rates are not affected by any groups' decision to hide a subset of their predictors. As a result, if some predictor is removed, then multiplicative weights will then have performance (asymptotically) close to the one of the best expert without the hidden one; this performance can only be worse (up to the regret term). This holds for any disjoint subgroup; it therefore still holds summing across all the (possibly exponential) disjoint subgroups.
\end{proof}

\subsection{Open problem: Computationally efficient Incentive Compatibility}\label{ssec:open_IC}
In Section~\ref{ssec:negative_IC}, we showed that AdaNormalHedge does not induce an IC mechanism as it used a single weight that deteriorated significantly in some disjoing subgroups and made some experts not usable in places that were essential and where hiding some experts would help with that. Using a single weight for each sleeping expert and updating it based on its instantaneous regret is not unique to AdaNormalHedge but is present to other algorithms such as the one of Blum and Mansour~\cite{blum2007external}. Since the algorithms do not distinguish between instantaneous regret obtained by different disjoint subgroups, the same construction can extend to those as well. On the other hand, in Section~\ref{ssec:positive_IC}, we also showed that separate multiplicative weights for each subgroup provide the asymptotic IC property as they have the nice property that the performance of the algorithm is exactly as good as the one of the best expert in this group. On the negative side, having one sub-algorithm per disjoint subgroup is computationally costly as it means that we have exponential number of sub-algorithms.

These two results lead to the following open question: can we design a computationally efficient algorithm (not enumerating over all disjoint subgroups) that satisfies the subgroup regret guarantee? Besides its application to subgroup fairness, this question has independent technical interest as it seems to require a fundamentally new approach for the sleeping experts problem.

\section{Impossibility results for fixed averages of FNR and FPR}
\label{sec:incompatibility_example}
In this section we demonstrate that if rather than minimizing the {\em fraction of errors}, each group wishes to minimize the unweighted average of its False Negative Rate (FNR) and False Positive Rate (FPR), or any fixed nontrivial weighted average of these two quantities, then guarantees of the form given in earlier sections are intrinsically not possible when there is no zero-error predictor.\footnote{Minimizing just FNR (resp.~just FPR) is trivial by predicting positive (resp.~negative) on every example.}

\paragraph{False negative and false positive rates.} 
Many fairness notions explicitly consider false negative and false positive rates.  For example, the notion of Equalized Odds \cite{hardt2016equality} imposes that both these rates should be the same for all groups of interest, and the notion of Equality of Opportunity \cite{hardt2016equality} imposes equality for just the FNR.  Given the attention paid to false negative and false positive rates, it is natural to consider the objective of minimizing their average (or minimizing their maximum, which our negative results will also apply to).  More formally, under this objective, the loss of the algorithm corresponds to $\frac{1}{|t:y_t=+|}\sum_{t:y_t=+}\hat{\ell}_t+\frac{1}{|t:y_t=-|}\sum_{t:y_t=-}\hat{\ell}_t$. If we have just one group and aim to achieve performance compared to the one of the best expert $f^{\star}\in \mathcal{F}(g)$ for this objective, i.e. $\frac{1}{|t:y_t=+|}\sum_{t:y_t=+}\ell_{t,f^{\star}}+\frac{1}{|t:y_t=-|}\sum_{t:y_t=-}\ell_{t,f^{\star}}$, this is feasible even in an online setting by running a no-regret algorithm and weighting each example based on the number of examples with the same label that exist. Also note that, unlike balance notions, {\em if groups are disjoint} then simultaneously optimizing this objective for both groups does not create some inherent conflict between groups, e.g. necessitating to have performance on some group that is worse than necessary.

Unfortunately, we show that while minimizing the unweighted average of FNR and FPR seems like a benign objective, it can be impossible to produce a global prediction strategy that, on each group, performs nearly as well as the best predictor given for that group under this objective. That is, even though all groups appear to have objectives that are ``aligned'' they may not be simultaneously satisfiable. This impossibility result holds due to purely statistical reasons even in the batch setting and has nothing to do with the online or non-i.i.d. nature of examples. Note that if the populations are not overlapping, the batch setting for this goal is straightforward: among the available classifiers, for each group select the one with the best performance according to the given objective. The following theorem shows that, with overlapping populations, this is no longer possible.

\begin{theorem}\label{thm:unweighted_average_example}
\vspace{0.1in}
Consider overlapping groups wishing to achieve performance on their members comparable to their group-specific predictors with respect to the objective of unweighted average of false positive and false negative rates. Even in the batch setting, there exist settings where it is impossible to simultaneously achieve this goal for both groups.
\end{theorem}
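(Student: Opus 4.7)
The plan is to exhibit a concrete \emph{batch} instance with two overlapping groups $A,B$ and witness predictors $f_A\in\mathcal{F}(A)$, $f_B\in\mathcal{F}(B)$ such that each witness achieves vanishing $(\mathrm{FPR}+\mathrm{FNR})/2$ loss on its own group while every single global predictor $f$ (possibly randomised) incurs an $\Omega(1)$ loss on at least one of the two groups. The main idea is to exploit the asymmetric normalisation of FPR versus FNR: by making group $A$ mostly negative and group $B$ mostly positive, the FNR on $A$ and the FPR on $B$ become extremely sensitive to the very few individuals that live in $A\cap B$, and those individuals can be forced to pull the predictor in opposite directions.

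Concretely, I would fix integers $m\gg k\geq 1$ and take $A\setminus B$ to be $m$ examples all labelled $-$, $B\setminus A$ to be $m$ examples all labelled $+$, and $A\cap B$ to be $2k$ examples that all share one common feature vector, of which $k$ are labelled $+$ and $k$ are labelled $-$. Since every predictor is a function of the features, and the members of $A\cap B$ are feature-indistinguishable, every deterministic $f$ labels all of $A\cap B$ identically, and every randomised $f$ assigns them the same predictive distribution, which I summarise by the single scalar $p:=\Pr[f(x)=+]$ for $x\in A\cap B$. Let $f_A$ predict $-$ on $A\setminus B$ and $+$ on $A\cap B$, and symmetrically let $f_B$ predict $+$ on $B\setminus A$ and $-$ on $A\cap B$. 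A short calculation gives $(\mathrm{FPR}_A(f_A)+\mathrm{FNR}_A(f_A))/2 = k/(2(m+k))$ and symmetrically for $f_B$ on $B$, so both witness losses tend to $0$ as $m\to\infty$.

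For the lower bound I would then fix an arbitrary global $f$ with parameter $p$. Because the only positives in $A$ are the $k$ positives lying inside $A\cap B$, linearity of expectation yields $\mathbb{E}[\mathrm{FNR}_A(f)]\geq 1-p$, and symmetrically $\mathbb{E}[\mathrm{FPR}_B(f)]\geq p$. Hence the expected $(\mathrm{FPR}+\mathrm{FNR})/2$ of $f$ is at least $(1-p)/2$ on $A$ and at least $p/2$ on $B$, so the larger of these two quantities is at least $1/4$ for every choice of $p\in[0,1]$. Comparing this constant with the $o(1)$ losses of $f_A$ and $f_B$ gives an $\Omega(1)$ additive gap on at least one group, which is the claimed impossibility.

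The construction extends immediately to any fixed nontrivial weighting $\alpha\,\mathrm{FPR}+(1-\alpha)\,\mathrm{FNR}$ with $\alpha\in(0,1)$, and to $\max(\mathrm{FPR},\mathrm{FNR})$, by scaling the sizes of $A\setminus B$ and $B\setminus A$ as functions of $\alpha$ so that the dominant normalisation on each group keeps the witness losses vanishing while the forced tradeoff on $A\cap B$ remains of constant size. The main delicate step is ruling out randomisation, which is exactly what the feature-identification of $A\cap B$ buys: it collapses any randomised (or even arbitrarily correlated) strategy on $A\cap B$ to the single scalar $p$, after which the linearity of the objective in the per-example predictions closes the argument.
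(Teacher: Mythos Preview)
Your proposal is correct and follows essentially the same approach as the paper: two overlapping groups whose non-overlap is homogeneously labelled and whose intersection is split evenly between positive and negative labels, with witness predictors that are perfect on the non-overlap and take one fixed side on the intersection; the lower bound then reduces any global rule on the intersection to a single scalar $p$ and shows $\max\{(1-p)/2,\,p/2\}\geq 1/4$. The only cosmetic differences are that the paper swaps the roles of positive and negative on $A\setminus B$ versus $B\setminus A$, fixes concrete proportions (80/20, giving witness loss $1/18$) rather than letting $m\to\infty$, and leaves implicit the feature-indistinguishability of the intersection that you spell out.
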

\begin{proof}
Assume two groups $\mathcal{G}=\crl{A,B}$ that have $80\%$ of their examples disjoint from the other group and the remainder $20\%$ in common, as illustrated in Figure \ref{fig:unweighted_average}. All the non-overlapping examples of group $A$ have positive label and the non-overlapping examples of group $B$ have negative label. With respect to the shared portion, the examples have labels that are uniformly distributed: conditioned on being in the intersection, the probability of the label being positive is 0.5.

Regarding the predictors, we have two predictors $\mathcal{F}=\crl{f_A,f_B}$. Predictor $f_A$ correctly predicts positive in the non-overlapping examples ($A \setminus B$), and predicts negative in the overlapping part $A \cap B$. The false positive rate of this predictor is $0$ and the false negative rate is $1/9$. Analogously, predictor $f_B$ predicts negative on the examples in $B \setminus A$ and predicts positive in $A\cap B$, resulting in a false positive rate on examples in $B$ of $1/9$ and a false negative rate of $0$. Therefore, for both $g\in \crl{A,B}$, $f_g$ has a generalization error of $1/18$ with respect to the unweighted average of false positive and false negative rates.

We now show that this performance cannot (even approximately) be simultaneously achieved for both groups by combining these two predictors even in the batch setting. In particular, since examples in $A\cap B$ all have uniformly random labels, we can only select some probability $p$ to predict a positive label for points in the intersection. Even if we perfectly classify all the examples not in the intersection, if $p>1/2$ then the false positive rate of $A$ is higher than $1/2$ (as it misclassifies half of the negative examples); otherwise the false negative rate of $B$ is no less than $1/2$.  This means that one of the two populations will necessarily have unweighted average of false positive and false negative rates higher than $1/4$ and therefore will not achieve performance even approximately as good as its best predictor.
\end{proof}

\begin{figure}[!h]
\centering
\includegraphics[width=0.6\textwidth]{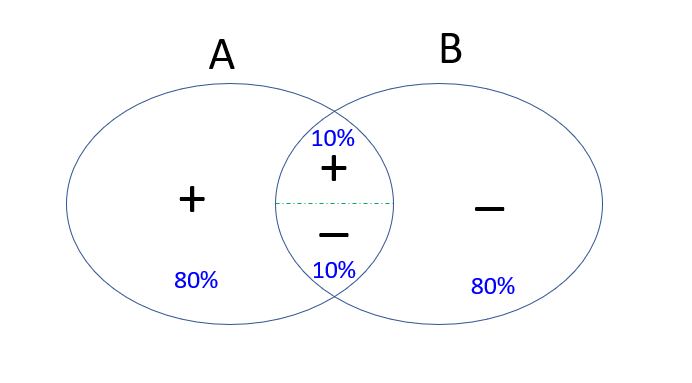}
\caption{Construction illustrating that unweighted average of false negative and false positive rate cannot be simultaneously optimized with respect to overlapping populations (Theorem \ref{thm:unweighted_average_example}).}
\label{fig:unweighted_average}
\end{figure}

\paragraph{Discussion.} The above result illustrates the additional complications caused by overlapping groups and highlights a subtlety in the positive results we provided in the previous sections. In particular, the key distinction is that for the objective of average loss (or error rate), the average contribution of any given example is the same across all groups; a misclassification is equally damaging for all groups an example belongs to. This is not the case in the setting considered here of minimizing the average (or maximum or any fixed nontrivial combination) of FPR and FNR where the harm of each misclassified example needs to be weighted differently across groups depending on their proportion of positive and negative examples.

\section{Conclusions}
\label{sec:discussion}
In this paper, we consider settings where different overlapping populations coexist and we wish to design algorithms that do not treat any population unfairly. We consider a notion of fairness that corresponds to predicting as well as humanly (or algorithmically) possible on each given group, rather than based on requirements for equality. This framework can directly incorporate a designer's goal of good overall prediction by creating one extra group (for the designer) that includes all the examples. Our results extend to the more realistic one-sided feedback (apple tasting) setting and have a nice game-theoretic interpretation. Our work makes a step towards identifying fairness notions that work well in online settings and are compatible with the existence of multiple parties, each with their own interests in mind. Our impossibility results with respect to the average (or maximum) of false negative and false positive rates demonstrate that satisfying the interests of different overlapping populations is quite subtle, further highlighting the positive results. 

Regarding incentives, we show how to efficiently achieve Individual Rationality and how to inefficiently achieve both Individual Rationality and Incentive Compatibility. Achieving IR and IC together in a computationally efficient way (without enumerating across all disjoint intersections of subgroups) is a very interesting question that seems to require novel learning-theoretic ideas as we discuss in Section~\ref{ssec:open_IC}. On the apple tasting front, we provide three distinct guarantees; the resulting algorithms are near-optimal but suffer from orthogonal shortcomings as we discuss in Section~\ref{sec:sleeping_apples}. Avoiding these and achieving an optimal guarantee for sleeping experts with apple tasting is an interesting open question.

\subsection*{Acknowledgements}
The authors would like to thank Suriya Gunasekar for various useful discussions during the initial stages of this work and Manish Raghavan for offering comments in a preliminary version of the work.

\bibliographystyle{alpha}
\bibliography{bibliog}

\appendix
\section{Proof of Theorem~\ref{thm:first_reduction}}
\label{app:first_reduction}
\paragraph{Theorem \ref{thm:first_reduction} restated.}
Let $\mathcal{A}$ be an algorithm with regret bounded by $\bigO{\sqrt{T\cdot\log(\mathcal{\abs{H}}})}$ when compared to an expert class $\mathcal{H}$, run on $T$ examples and split the examples in disjoint intersections, where each intersection corresponds to a distinct profile of subgroup memberships. For each intersection $I$, randomly selecting an exploration point every $(T(I))^{1/3}$ examples and running separate versions of $\mathcal{A}$ for each $I$ provides subgroup regret on group $g\in\mathcal{G}$ of $$\bigO{\prn*{\prn*{{2^{|\mathcal{G}|}}}^{1/3}\cdot \prn*{T(g)}^{2/3}\cdot \sqrt{\log\prn*{N}}}}$$
where $T(g)=|t:g\in\mathcal{G}(t)|$ is the size of the $g$-population and $N=\abs{\mathcal{F}}+\sum_{g\in\mathcal{G}} \abs{\mathcal{F}(g)}$.
\begin{proof}
The guarantee follows from three observations:
\begin{enumerate}
\item Among the exploration points, we run a classical experts algorithm so, on these examples, we have a regret guarantee that is square-root of the number of these examples.
\item For each phase, the exploration point is randomly selected and therefore the regret that we incur in the exploration point is an unbiased estimator of the average regret we incur in the whole phase (since the distribution of the algorithm in the phase is the same). As a result, the total regret in a phase is in expectation $(T(I))^{1/3}$ times the regret at the exploration point. 
\item A particular group can have examples in at $2^{|G|}$ intersections (as this is all the possible membership relationships with respect to the demographic groups). 
\end{enumerate}
We now formalize these three ideas, to obtain the guarantee. Let $\mathcal{F}(I)$ be the set of experts that are either global or belong to some $g\in I$, and update $\mathcal{F}(g)$ to include both the global experts and the group-specific experts. Initially we split the performance of our algorithm across all the intersections $I$ such that $g\in I$. Let $f^{\star}$ be the comparator with the minimum cumulative loss on $g$.
\begin{align*}
    \regret(g)=\sum_{t:g\in\mathcal{G}(t)}\sum_{f\in \mathcal{F}(g)}p_{t,f}\ell_{t,f}-\sum_{t:g\in\mathcal{G}(t)}\ell_{t,f^{\star}}=\sum_{I:g\in I}\sum_{t:g\in\mathcal{G}(t)\cap I}\prn*{\sum_{f\in \mathcal{F}(g)}p_{t,f}\ell_{t,f}-\ell_{t,f^{\star}}}
\end{align*}
Focusing on a particular intersection $I$ we connect its regret to the performance of the exploration points. Denote by $t(r, I)$ the exploration point for phase $r$ in intersection $I$. Also denote by $\tau(r)$ the beginning of the $r$-th phase. We use the fact that the exploration point is an unbiased representation on the regret at a phase (as it is selected uniformly and the algorithm is only updated at the end of the phase). Applying the guarantee for $\mathcal{A}$ on the exploration times of all phases, we obtain:
\begin{align*}
    \sum_{t:g\in\mathcal{G}(t)\cap I}\prn*{\sum_{f\in \mathcal{F}(g)}p_{t,f}\ell_{t,f}-\ell_{t,f^{\star}}}&=\sum_{r=1}^{(T(I))^{2/3}}\sum_{t=\tau_r}^{\tau_{r+1}-1}\prn*{\sum_{f\in \mathcal{F}(g)}p_{t,f}\ell_{t,f}-\ell_{t,f^{\star}}}\cdot \mathbf{1}\crl{g\in\mathcal{G}(t)\cap I}\\
    &= (T(I))^{1/3}\cdot \sum_{r=1}^{\prn*{T(I)}^{2/3}}\En\brk*{\sum_{f\in \mathcal{F}(g)}p_{t(r,I),f}\ell_{t(r,I),f}-\ell_{t(r,I),f^{\star}}}\\
    &\leq \prn*{T(I)}^{1/3}\sqrt{(T(I))^{2/3}\log\prn{|\mathcal{F}|}}=\prn*{T(I)}^{2/3}\cdot \sqrt{\log(|\mathcal{F}|)}
\end{align*}
where the expectations are taken over the random selections of $t(r,I)$. 

Combining the above and applying Holder inequality, we obtain:
\begin{align*}
    \reg(g)&=\sum_{I:g\in I}\sum_{t:g\in\mathcal{G}(t)\cap I}\prn*{\sum_{f\in \mathcal{F}(g)}p_{t,f}\ell_{t,f}-\ell_{t,f^{\star}}}\leq \sum_{I:g\in I} (T(I))^{2/3}\cdot \sqrt{\log(|\mathcal{F})}\\
    &\leq \prn*{\sum_{I:g\in I}1}^{1/3}\cdot \prn*{\sum_{I:g\in I}\prn*{T(I)}}^{2/3}\sqrt{\log(|\mathcal{F}|}\leq \prn*{2^{|\mathcal{G}|}}^{1/3}\cdot \prn*{T(g)}^{2/3}\cdot \sqrt{(\log(|\mathcal{F}|)}.
\end{align*}
Finally, we need to control how much we are losing via the exploration rounds. Since we apply pay-per-feedback we lose $1$ every time we inspect (an upper bound on the apple tasting loss). For any intersection, we have exactly $(T(I))^{2/3}$ such exploration points so we are not losing something more compared to what we previously discussed which completes the proof.
\end{proof}

\section{Proof of Theorem~\ref{thm:second_reduction}}
\label{app:second_reduction}
\paragraph{Theorem~\ref{thm:second_reduction} restated.}
Let $\mathcal{A}$ be an algorithm with sleeping regret bounded by $\bigO\prn*{\sqrt{T(h)\cdot \log(\abs{\mathcal{H}}})}$
for any expert $h\in\mathcal{H}$ where $\mathcal{H}$ is any class. Randomly selecting an exploration point every $T^{1/3}$ examples (irrespective of what groups they come from) and running $\mathcal{A}$ on these points provides subgroup regret on group $g\in\mathcal{G}$ of
$$
\bigO\prn*{T^{1/6}\cdot (T(g))^{-1/2}\cdot \sqrt{\log(N)}}
$$
where $T(g)=|t:g\in \mathcal{G}(t)|$ is the size of the $g$-population and $N=\abs{\mathcal{F}}+\sum_{g\in\mathcal{G}} \abs{\mathcal{F}(g)}$.
\begin{proof}
The guarantee follows from two observations:
\begin{enumerate}
\item Given that we run a sleeping experts algorithm across the exploration points, if we just focused on those examples, we simultaneously satisfy regret on them that is square-root of their size.
\item Within any phase the exploration point is uniformly at random selected. As a result, it is an unbiased estimator of the average regret we incur in the whole phase. Note that this is now the average across all rounds and not only rounds where we have members of the particular group which results to the dependence on the time-horizon $T$.  
\end{enumerate}
We now formalize these ideas. As before, we connect the regret on a group to the one of the exploration points. Denote by $t(r)$ the random variable that corresponds to the exploration point. Now the size of the phases is fixed in advance so $r$-th phase starts at $\tau_r=r\cdot T^{1/3}$. Similarly as before, we denote by $f^{\star}$ the comparator with the minimum cumulative loss on $g$. Applying linearity of expectation and Jensen's inequality, we obtain:
\begin{align*}
    \regret(g)&=\sum_{t:g\in\mathcal{G}(t)}\sum_{f\in\mathcal{F}(g)}p_{t,f}\ell_{t,f}-\sum_{t:g\in\mathcal{G}(t)}\ell_{t,f^{\star}}\\
    &=\sum_{r=1}^{T^{2/3}}\sum_{t=\tau_r}^{\tau_{r+1}-1}\prn*{\sum_{f\in\mathcal{F}(g)}p_{t,f}\ell_{t,f}-\ell_{t,f^{\star}}}\cdot \mathbf{1}\crl{g\in\mathcal{G}(t)}\\
    &=T^{1/3}\cdot  \sum_{r=1}^{T^{2/3}}
    \En\brk*{
    \prn*{\sum_{f\in\mathcal{F}(g)}p_{t(r),f}\ell_{t(r),f}-\ell_{t(r),f^{\star}}}\cdot \mathbf{1}\crl*{g\in\mathcal{G}\prn*{t(r)}}
    }\\
    &\leq T^{1/3}\cdot \En\brk*{\sqrt{\sum_{r=1}^{T^{2/3}}\mathbf{1}\crl{g\in \mathcal{G}(t(r))}\cdot\log\prn*{N}}}\\
    &\leq T^{1/3}\cdot \sqrt{\En\brk*{\sum_{r=1}^{T^{2/3}}\mathbf{1}\crl{g\in \mathcal{G}(t(r))}\cdot \log\prn{N}}}\\
    &= T^{1/3}\cdot \sqrt{T(g)\cdot T^{-1/3}\cdot \log\prn{N}}=T^{1/6}\cdot \sqrt{T(g)\cdot \log(N)}.
\end{align*}
The expectation is taken over the random selections of the exploration times $t(r)$. The second-to-last equality holds as each member of group $g$ has probability $T^{-1/3}$ to be an exploration point and therefore the expected number of exploration points in the group is $T(g)\cdot T^{-1/3}$. 

Finally, for the pay-for-feedback model, we need to also consider the effect of inspected rounds in loss (we are losing $1$ every time that we inspect). The number of these rounds on examples in $g$ is at most
$T(g)\cdot T^{-1/3}\leq (T(g))^{2/3}$ so lower order than the term we already have.
\end{proof}

\section{Proof of Theorem~\ref{thm:third_reduction}}
\label{app:third_reduction}
\paragraph{Theorem \ref{thm:third_reduction} restated.}
Applying the algorithm described in the third reduction (Section\ref{sec:sleeping_apples} provides subgroup regret on group $g\in\mathcal{G}$ of
$$
\bigO\prn*{|\mathcal{G}|\cdot (T(g))^{3/4}\cdot\sqrt{\log(N)}}
$$
where $T(g)=|t:g\in\mathcal{G}(t)|$ is the size of the $g$-population and $N=\abs{\mathcal{F}}+\sum_{g\in\mathcal{G}} \abs{\mathcal{F}(g)}$.
\begin{proof}
There are three important components to prove this guarantee.
\begin{enumerate}
    \item The number for relevant phases of each group (phases where they have at least one example) is at most $T(g)$. This provides an upper bound on the number of phases that we need to consider with respect to group $g$.
    \item Using a similar analysis as before, we can create a guarantee about the regret we are incurring in the exploration points and multiply it by the $(T(g))^{1/4}$ which is the size of the phase. This would have created a completely unbiased estimator if there was no overlap with other groups.
    \item A final complication is that exploration may occur due to other groups so we need to understand how much we lose there. For that, we observe that smaller groups are explored with higher probability (the interaction with larger groups does not therefore significantly increase their probability of inspection). On the other hand, larger groups do not often collide with significantly smaller groups due to the latters' size.
\end{enumerate}
More formally, to analyze the subgroup regret for $g\in\mathcal{G}$, let's consider a fictitious setting where all phases have equal size for group $g$. This can be done by padding $0$s in the end of the phase. This fictitious setting has the same loss as the original (as it only differs in that it has some more examples with zero loss). For this fictitious setting, the exploration point is an unbiased estimator of the average regret. We first analyze the subgroup regret assuming that the points of inspection in other groups do not overlap with $g$. Applying similar ideas as in the previous theorem:
\begin{align*}
    \regret(g)&=\sum_{t:g\in\mathcal{G}(t)}\sum_{f\in\mathcal{F}(g)}p_{t,f}\ell_{t,f}-\sum_{t:g\in\mathcal{G}(t)}\ell_{t,f^{\star}}\\
    &=\sum_{r=1}^{T(g)}\sum_{t=\tau_r}^{\tau_{r+1}-1}\prn*{\sum_{f\in\mathcal{F}(g)}p_{t,f}\ell_{t,f}-\ell_{t,f^{\star}}}\cdot \mathbf{1}\crl{g\in\mathcal{G}(t)}\\
        &=(T(g))^{1/4}\cdot  \sum_{r=1}^{T(g)}
    \En\brk*{
    \prn*{\sum_{f\in\mathcal{F}(g)}p_{t(r),f}\ell_{t(r,g),f}-\ell_{t(r,g),f^{\star}}}\cdot \mathbf{1}\crl*{g\in\mathcal{G}\prn*{t(r,g)}, t(r,g)\leq t_{r+1}}
    }\\
    &=(T(g))^{1/4}\cdot  \sum_{r=1}^{T(g)}
    \En\brk*{
    \prn*{\sum_{f\in\mathcal{F}(g)}p_{t(r),f}\tilde{\ell}_f(r,g)-\tilde{\ell}_{f^{\star}(r,g)}}
    }\\
    &\leq (T(g))^{1/4}\cdot \sqrt{T(g)\log(N)}=(T(g))^{3/4}\sqrt{\log(N)}
\end{align*}
The expectation is taken over the random selections of the exploration times $t(r,g)$. The last inequality holds as the number of non-zero entries in the previous sum is at most the number of phases and we run $\mathcal{A}$ on the estimated losses.

Let's understand now how much group $g$ is harmed by the fact that some of its examples may be inspected by overlapping groups. As a result, we need to count in the regret the contribution of these examples as well. Note that each group has at most one exploration point within a phase -- this is the reason why we get the dependence on $|\mathcal{G}|$ in our bound. 

Group $g$ is only affected by others' exploration if this happens on examples that are also members of group $g$. We first consider the effect of smaller groups than $g$. For any group $g'$ with $T(g')\leq T(g)$ the exploration points at group $g'$ are at most the size of the group times the probability that each example is an exploration point for $g'$, i.e. $T(g')\cdot (T(g'))^{-1/4}=(T(g'))^{3/4}$. Since the size of $g'$ is smaller than the size of $g$, this contributes at most an extra term of $(T(g))^{3/4}$ in the regret.

Let's now consider groups $g'$ with $T(g')> T(g)$. Then, even if all examples of $g$ are also examples of $g'$, the probability that each of them is an exploration point due to $g'$ is $(T(g'))^{-1/4}$. Hence the expected number of exploration points of $g'$ on examples in $g$ is at most $T(g)\cdot (T(g'))^{1/4}\leq (T(g))^{3/4}$.
\end{proof}

\end{document}